%% file: main.tex
\documentclass{article}

\usepackage[scaled=0.85]{beramono}

\usepackage{./arxiv}

\usepackage[numbers,compress,sort]{natbib}

\usepackage[utf8]{inputenc} 
\usepackage[T1]{fontenc}    
\usepackage[table]{xcolor}  
\definecolor{light-blue}{RGB}{0, 118, 255}
\usepackage[colorlinks=true, linkcolor=light-blue, citecolor=light-blue, urlcolor=light-blue, hyperfootnotes=false]{hyperref}
\usepackage[perpage]{footmisc}
\usepackage{url}            
\usepackage{booktabs}       
\usepackage{amsfonts}       
\usepackage{amsmath}
\usepackage{amsthm}
\usepackage{thm-restate}
\usepackage{nicefrac}       
\usepackage{microtype}      
\usepackage{listings}       
\usepackage{enumitem}       
\usepackage{graphicx}
\usepackage{subcaption}
\usepackage{multirow}
\usepackage{calc}
\usepackage{cleveref}
\usepackage{pifont}
\usepackage{makecell}
\usepackage{import}

\makeatletter
\renewcommand\paragraph{\@startsection{paragraph}{4}{\z@}%
  {0pt}
  {-0.5em}
  {\normalfont\normalsize\bfseries}%
  } 
\makeatother

\theoremstyle{plain}
\newtheorem{theorem}{Theorem}[section]

\newtheorem{proposition}[theorem]{Proposition}

\theoremstyle{definition}
\newtheorem{definition}[theorem]{Definition}

\theoremstyle{remark}
\newtheorem{remark}[theorem]{Remark}

\newcommand{\R}{\mathbb{R}}

\import{./}{listing_styles.tex}

\title{\texttt{FLARE}: Verifying MILP Reformulations with LLM-Based Theorem Proving}

\author{%
  Henry Robbins \\
  Stanford University\\
  \texttt{hwr@stanford.edu} \\
  \And
  Connor Lawless \\
  Stanford University\\
  \texttt{lawlessc@stanford.edu} \\
  \And
  Madeleine Udell \\
  Stanford University\\
  \texttt{udell@stanford.edu} \\
  \And
  Ellen Vitercik \\
  Stanford University\\
  \texttt{vitercik@stanford.edu} \\
}

\begin{document}

\maketitle

\begin{abstract}
\import{./}{sections/0_abstract}
\end{abstract}
\keywords{Mixed Integer Linear Programming \and Automated Theorem Proving \and Large Language Models}

\import{./}{sections/1_introduction}
\import{./}{sections/2_related_work}
\import{./}{sections/3_formulations}
\import{./}{sections/4_reformulations}
\import{./}{sections/5_methodology}
\import{./}{sections/6_experiments}

\import{./}{sections/7_conclusion}
\import{./}{sections/acknowledgements}


{
\small
\bibliographystyle{plainnat}
\bibliography{ref}
}


\newpage
\appendix
\import{./}{appendix/1_constructive_audet_proof}

\import{./}{appendix/2_lean_formalization}
\newpage
\import{./}{appendix/3_implementation}
\newpage
\import{./}{appendix/4_prompts}
\newpage
\import{./}{appendix/5_formulation_bench}
\newpage
\import{./}{appendix/6_invalid_reformulations}
\newpage
\import{./}{appendix/7_experimental_details}
\newpage
\import{./}{appendix/8_additional_results}

\end{document}

%% file: listing_styles.tex


\definecolor{leanKeyword}{HTML}{7B1FA2}
\definecolor{leanComment}{HTML}{6A737D}
\definecolor{leanString}{HTML}{0B6623}
\definecolor{leanBg}{HTML}{F7F7F7}

\definecolor{jinjaExpr}{HTML}{00796B}
\definecolor{jinjaStmt}{HTML}{6A1B9A}
\definecolor{jinjaComment}{HTML}{6A737D}


\lstdefinelanguage{Lean4}{
  morekeywords={import,structure,where,def,theorem,lemma,example,instance,
    class,inductive,axiom,variable,open,namespace,end,if,then,else,let,in,
    do,fun,match,with,by,have,show,return,forall,exists},
  morekeywords=[2]{Type,Prop,Params,Vars,feasible,obj,paramMap,fwd,bwd,
    fwd_feas,bwd_feas,objMap,objMap_mono,fwd_obj,bwd_obj,
    MILPFormulation,MILPEquiv,StrictMono,StrictAnti},
  sensitive=true,
  morecomment=[l]{--},
  morecomment=[s]{/-}{-/},
  morestring=[b]",
  literate=
    {ℝ}{{$\mathbb{R}$}}1
    {→}{{$\to$}}1
    {∀}{{$\forall$}}1
    {∃}{{$\exists$}}1
    {∨}{{$\vee$}}1
    {∧}{{$\wedge$}}1
    {¬}{{$\neg$}}1
    {≤}{{$\leq$}}1
    {≥}{{$\geq$}}1
    {≠}{{$\neq$}}1
    {ℕ}{{$\mathbb{N}$}}1
    {ℤ}{{$\mathbb{Z}$}}1
    {↦}{{$\mapsto$}}1
    {∈}{{$\in$}}1
    {∑}{{$\sum$}}1
    {⟨}{{<}}1
    {⟩}{{>}}1
}

\lstdefinestyle{lean}{
  language=Lean4,
  basicstyle=\ttfamily\small,
  keywordstyle=\color{leanKeyword}\bfseries,
  keywordstyle=[2]\color{black},
  commentstyle=\color{leanComment}\itshape,
  stringstyle=\color{leanString},
  backgroundcolor=\color{leanBg},
  frame=single,
  rulecolor=\color{leanComment!40},
  framesep=4pt,
  xleftmargin=6pt,
  xrightmargin=6pt,
  showstringspaces=false,
  breaklines=true,
  columns=fullflexible,
  keepspaces=true,
  upquote=true,
}


\lstdefinestyle{jinja}{
  basicstyle=\ttfamily\small,
  moredelim=[s][\color{jinjaExpr}\bfseries]{\{\{}{\}\}},
  moredelim=[s][\color{jinjaStmt}\bfseries]{\{\%}{\%\}},
  moredelim=[s][\color{jinjaComment}\itshape]{\{\#}{\#\}},
  backgroundcolor=\color{leanBg},
  frame=single,
  rulecolor=\color{leanComment!40},
  framesep=4pt,
  xleftmargin=6pt,
  xrightmargin=6pt,
  breaklines=true,
  showstringspaces=false,
  tabsize=2,
  columns=fullflexible,
  keepspaces=true,
}

%% file: sections/0_abstract.tex
Mixed-Integer Linear Programming (MILP) is a fundamental tool for combinatorial optimization with extensive real-world applications. A central challenge is designing computationally efficient MILP formulations. Large Language Models (LLMs) offer new opportunities to automate the modeling process, from deriving formulations to strengthening them. Reliable automation requires robust methods for verifying that proposed formulations preserve the underlying optimization problem. However, existing approaches evaluate formulations numerically and fail to reason about general problem instances. We resolve this limitation by introducing a constructive definition of MILP reformulation that can be formalized in Lean and machine-checked. We develop \texttt{FLARE}\footnote{\texttt{FLARE} is implemented in the \texttt{milp-flare} Python package; see \url{https://flare.henryrobbins.com}.} (Formulation-Level Automated Reformulation Evaluation), a method that uses an LLM-based agent and the Lean proof assistant to verify proposed reformulations against a reference formulation. 
To evaluate our approach, we introduce FormulationBench\footnote{Download via the \texttt{formulation-bench} Python package; see \url{https://formulation-bench.henryrobbins.com}.}, a challenging dataset of 20 problems and 109 formulations. \texttt{FLARE} outperforms existing methods, with \textbf{100\%} accuracy on the NP-hard subset of FormulationBench. Furthermore, \texttt{FLARE} produces a machine-checkable certificate for every reformulation it accepts. For cases where formal guarantees are not necessary, we introduce \texttt{FLARE-NL}, a fast and cheap LLM proxy that matches \texttt{FLARE}'s accuracy but produces no certificate.\footnote{All experimental code is available at \url{https://github.com/henryrobbins/flare}.} These methods enable reliable verification in automated optimization modeling.

%% file: sections/1_introduction.tex
\section{Introduction}

Mixed-Integer Linear Programming (MILP) is a fundamental tool for combinatorial optimization with applications in scheduling \cite{floudas2005,ku2016}, planning \cite{pochet2006}, energy \cite{morais2010,knueven2020}, and chip design \cite{srinivasan2006}. A central challenge in MILP is problem formulation: translating a real-world scenario into a concrete mathematical model. Historically, problem formulation has required significant technical expertise. However, recent work has highlighted the ability of LLMs to translate natural-language problem descriptions into MILP formulations \cite{xiao2024, ahmaditeshnizi2024, astorg2025autoformulation, huang2025orlm, chen2025optichat}. The primary objective is to generate MILP formulations that faithfully represent the underlying optimization problem. 

Developing a faithful formulation is just the first stage in the modeling process. The same problem can often be expressed by multiple formulations that can lead to dramatically different solve times. In practice, experts use techniques like reformulation \cite{vanderbeck2010}, decomposition \cite{wolsey2020}, and cutting planes \cite{marchand2002a} to obtain efficient MILP formulations. LLMs offer the potential to automate this process \cite{yazdani2025, ferchtandiker2025}. More generally, we can view optimization modeling as a special case of algorithm design, where the chosen model dictates the algorithm's runtime. In this broader context, LLMs have recently been used to iteratively evolve efficient algorithms through generation and evaluation~\cite{romera-paredes2024,liu2024a,ye2024,novikov2025}. This approach has led to advances in mathematical discovery \cite{georgiev2025}, vehicle routing \cite{hottung2025,xie2026}, and system design \cite{hamadanian2025}. 

\paragraph*{Key Challenges.} A central challenge in automated algorithm design, particularly acute in the context of optimization modeling, is to ensure that an AI-generated formulation faithfully represents the original problem.
Existing approaches rely heavily on heuristics, most commonly comparing optimal objective values on a single instance \cite{ahmaditeshnizi2024, huang2025orlm, liang2026llm}. However, such checks are unreliable and can fail under simple transformations such as adding cutting planes or rescaling an objective (see \cite{zhai2025a} for a discussion).

Inspired by Karp reductions in complexity theory, \citet{zhai2025a} introduced EquivaMap, which uses LLMs to discover how solutions map between formulations. However, EquivaMap still operates at the \emph{instance level}. It validates a reformulation for a specific instance (e.g., a particular set of jobs to schedule) but not in general (e.g., any possible set of jobs to schedule). 
Such instance-level checks can miss formulation inconsistencies that arise on other problem instances. For example, we identify several cutting planes proposed by an LLM-based modeling framework called EvoCut~\cite{yazdani2025} that pass instance-level validation but remove optimal solutions for some instances. To mitigate this risk, we seek \emph{formulation-level} guarantees that hold for every problem instance.  Such guarantees require reasoning over all instances, rather than computationally evaluating a small subset of instances.

\begin{figure}[t]
\centering
\includegraphics[width=\linewidth]{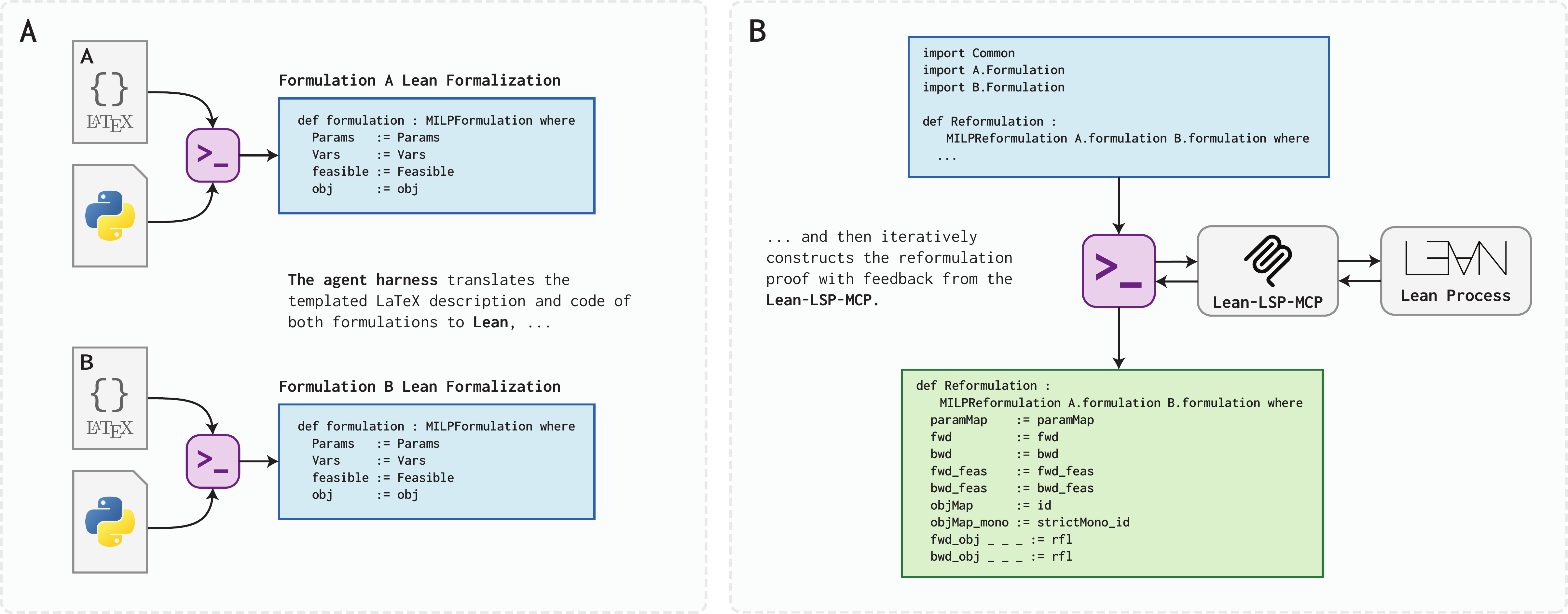}
\captionsetup{font=footnotesize, labelfont=bf}
\caption{The \texttt{FLARE} workflow. (a) An agent is given templated \LaTeX{} and Python representations of a pair of MILP formulations and their parameter mapping, and is instructed to formalize them in Lean. Combined with our Lean formalization of MILP reformulation, these outputs yield a formal claim that formulation \texttt{B} is a reformulation of \texttt{A} under the fixed parameter map. (b) The agent then attempts to construct a Lean proof of that claim. The Lean-LSP-MCP allows the agent to obtain detailed feedback from the Lean process as it develops the proof.}
\label{fig:flare}
\end{figure}

\paragraph*{Our Contributions.}
This paper introduces \texttt{FLARE}, an automated framework for validating reformulations at the formulation level. \texttt{FLARE} uses formal verification to make universal claims over problem instances machine-checkable, leveraging recent advances in automated theorem proving (ATP) to generate reformulation proofs automatically \cite{liu2026e}.

Classic notions of reformulation require reasoning about optimal solution sets and are not well-suited to ATP. To address this challenge, we introduce and formalize a constructive definition of reformulation that requires explicit mappings between parameter spaces, feasible regions, and objective values. \texttt{FLARE} instantiates this definition with autoformalized MILP formulations to obtain a formal statement that can be certified with ATP. Provided the formalizations are faithful, a certificate proves a reformulation is valid for all problem instances. 
If ATP fails to produce a certificate, \texttt{FLARE} does not certify the reformulation as valid.
\texttt{FLARE} is the first automated approach to produce verifiable reformulation certificates, enabling trustworthy optimization modeling with LLMs.

We also introduce \texttt{FLARE-NL}, a fast and cheap proxy that prompts a frontier reasoning model with the same definition. The two methods serve different regimes. A \texttt{FLARE} certificate on faithful formalizations admits no false positives; this property is critical in settings with zero tolerance for error (e.g., energy~\cite{knueven2020}). \texttt{FLARE-NL} is faster and cheaper but offers no such guarantee, making it the natural choice in non-critical settings or as a screening heuristic prior to \texttt{FLARE}.

Our contributions can be summarized as follows:

\begin{itemize}[topsep=2pt,parsep=0pt,leftmargin=2em]
\item \textbf{Constructive definition of MILP reformulation.} We introduce a constructive definition of MILP reformulation that is amenable to formal verification and ATP, enabling the first automated method for certifying reformulations at the formulation level.
\item \textbf{AI for formulation verification.} 
We develop \texttt{FLARE}, a framework that combines LLM-based autoformalization with ATP to generate machine-checkable reformulation certificates, and \texttt{FLARE-NL}, an LLM proxy that trades formal guarantees for improved speed and cost.
\item \textbf{Realistic benchmark for reformulation.}
We introduce FormulationBench, a benchmark dataset of 20 problems and 109 formulations capturing realistic modeling transformations. Empirically, we show both \texttt{FLARE} and \texttt{FLARE-NL} outperform existing methods, achieving \textbf{100\%} accuracy on the NP-hard subset of FormulationBench.
\item \textbf{Demonstrated need for formal proofs.} 
We apply \texttt{FLARE} to prior AI-driven MILP transformations, revealing 5 invalid cutting planes proposed by EvoCut \cite{yazdani2025} and 4 reformulations proposed in \citet{ferchtandiker2025}: these ``equivalent'' formulations are wrong or omit necessary assumptions (Appendix \ref{sec:invalid-reformulations}).
\end{itemize}

%% file: sections/2_related_work.tex
\section{Related Work}

An efficient MILP formulation can decrease the solve time by orders of magnitude \cite{conforti2014integer}. To improve solve time, experts apply transformations such as lifting \cite{balas1979disjunctive}, change of variables \cite{wolsey2020}, and cutting planes \cite{van1987solving} that strengthen the formulation while preserving optimal solutions (see \cite{liberti2009} for a broader discussion). We study the complementary problem of \emph{verifying} such reformulations, particularly in emerging settings where formulations are generated or modified by LLMs. This perspective connects three lines of research: (i) LLM-based systems that generate and refine MILP formulations, (ii) methods for verifying reformulations, and (iii) autoformalization and automated theorem proving.

\paragraph*{LLMs for MILP Modeling.}
A growing body of work has applied LLMs to automate the translation of natural language problem descriptions into MILP formulations. Early efforts focused on natural language processing (NLP) pipelines for structured extraction \cite{ramamonjison23}, while more recent \emph{optimization copilots} use LLM-based multi-agent systems \cite{ahmaditeshnizi2024, ahmaditeshnizi2025, mostajabdaveh2024, xiao2024, liang2026llm, astorg2025autoformulation, drossman2026let} or fine-tuning \cite{huang2025orlm, jiangllmopt} to handle complex, multi-constraint problems. These capabilities have been applied across domains including supply chain management \cite{li2023a}, diagnosing infeasible models \cite{chen2024diagnosing, chen2025optichat}, and scheduling \cite{lawless2024want}. Beyond formulating a \textit{correct} model, recent work has also explored the use of LLMs to generate \textit{stronger} MILP formulations via cutting planes \cite{yazdani2025}, reformulations \cite{ferchtandiker2025}, or better solver configuration \cite{lawless2025llms}. However, these approaches rely on instance-level validation. We demonstrate examples where this limitation leads to invalid cutting planes and reformulations.

\paragraph*{Automatic Reformulation Checking Methods.}
Determining whether one formulation is a reformulation of another is central to evaluating LLM-generated MILP formulations. Existing approaches largely operate at the instance level. Canonical accuracy checks for a direct mapping between declarations (i.e., constraints and objectives) \cite{ramamonjison23}, while execution accuracy compares optimal objective values after solving both formulations \cite{ahmaditeshnizi2024, huang2025orlm, liang2026llm}. More recently, \textit{EquivaMap} \cite{zhai2025a} used LLMs to infer variable mappings between formulations and validate them by mapping optimal solutions on a specific instance. Structural approaches represent MILPs as bipartite graphs and measure similarity via graph isomorphism or edit distance \cite{xing2024, steever2024graph, wang2024, wang2025orgeval}, avoiding the need to solve the optimization problem directly. Unlike existing approaches, \texttt{FLARE} verifies reformulations at the formulation-level by producing machine-checkable certificates that hold across all problem instances.

\paragraph*{Autoformalization and Automated Theorem Proving.}
Autoformalization translates natural language into formal representations~\cite{wu2022a,gao2025,wang2025e,liu2025f,jana2026}, while automated theorem proving (ATP) generates machine-checkable proofs for formal statements~\cite{lin2025, ren2025b, varambally2025, chen2025d, axiom2026, logicalintelligence2026, liu2026e, requena2026}. Recent advances in LLMs have significantly improved both tasks, with agentic frameworks combining (fine-tuned) language models and proof assistants (e.g., Lean) to formalize and solve complex mathematical problems. Most recently, simple agentic harnesses have been shown to be competitive ATP methods~\cite{requena2026,liu2026e}. We adopt an LLM-based agent as the ATP method in \texttt{FLARE}  and introduce a constructive definition of reformulation to make proving reformulations tractable.

%% file: sections/3_formulations.tex
\section{Formalizing Formulations}
\label{sec:formulations}

We begin by formally defining a \emph{MILP formulation}. In particular, we distinguish between a \emph{formulation} and the parameter (or data) values that specify a particular \emph{instance} of the problem \cite{fourer2003}. Take the traveling salesman problem (TSP) as an example. A TSP formulation is defined on an abstract set of cities. A TSP instance is one such set of cities. Instantiating the formulation with an instance yields a concrete MILP to be solved.

\subimport{definitions/}{formulation}

\paragraph*{Running Example.}
\label{sec:tsp}
\subimport{formulations/}{tsp}

Both formulations define decision variables $x_{ij} \in \{0,1\}$ to indicate if the edge $(i,j)$ is used in the tour. To prevent subtours, \eqref{eq:dfj} uses an exponential family of subtour-elimination constraints, while \eqref{eq:mtz} uses a polynomial family of constraints with auxiliary position variables $u_i \in \R$ for each node. We denote these formulations as $\mathcal{M}_{\text{DFJ}}$ and $\mathcal{M}_{\text{MTZ}}$, respectively. Both formulations faithfully represent the TSP, but have notable differences that affect solve times. $\mathcal{M}_{\text{DFJ}}$ is a \emph{stronger} formulation than $\mathcal{M}_{\text{MTZ}}$ in that it admits fewer fractional solutions in the linear relaxation. Despite the exponential size of $\mathcal{M}_{\text{DFJ}}$, it can be solved efficiently with constraint generation.

With this formal definition established, we now consider multiple notions of MILP reformulation and introduce a constructive definition amenable to formalization and ATP.

%% file: definitions/formulation.tex
\begin{restatable}[]{definition}{formulation}
\label{def:formulation}
A MILP \emph{formulation} $\mathcal{M}$ is a tuple ${\mathcal{M} = (\mathcal{P}, \mathcal{F}, f_0)}$ with parameter space $\mathcal{P}$, feasible region $\mathcal{F}(p) \subseteq \R^{n(p)}$, and objective function $f_0$. For instance $p \in \mathcal{P}$, the feasible region $\mathcal{F}(p)$ is defined by $m(p)$ linear constraints, $f_i(\cdot ; p) : \R^{n(p)} \to \R$ for all $i \in [m(p)]$. The first $k(p) \leq n(p)$ variables are integers. The feasible region is
$$\mathcal{F}(p) = \{x \in \mathbb{Z}^{k(p)} \times \mathbb{R}^{n(p)-k(p)}~|~f_i(x;p) \leq 0 \; \forall i\in[m(p)]\}.$$
When the instance $p$ is clear from context, we use $n$, $m$, and $k$ instead of the parameterized forms. The objective is to minimize\footnote{We write all formulations as minimization problems; maximization objectives can be converted by negating the objective.} the linear function $f_0(\cdot ; p) : \R^{n(p)} \to \R$. A formulation $\mathcal{M}$ is \emph{instantiated} with an instance $p \in \mathcal{P}$. We denote an instantiated formulation as ${\mathcal{M}(p) = (\mathcal{F}(p), f_0(p))}$.
\end{restatable}

%% file: formulations/tsp.tex
The traveling salesman problem (TSP) aims to find the shortest tour in a graph that visits every node exactly once. A TSP instance is a weighted fully-connected graph $G=(V,E,w)$ on $n = |V| \geq 2$ nodes with edge weights $w_{ij}$. We write $\mathcal{P}_{\text{TSP}}$ for the set of all such graphs. Two common MILP formulations for the TSP are the Dantzig-Fulkerson-Johnson (DFJ) \cite{dantzig1954} and Miller-Tucker-Zemlin (MTZ) \cite{miller1960} formulations:

\noindent\scriptsize%
\begin{minipage}[t]{0.48\textwidth}
\vspace{-8pt}
\begin{equation}
\begin{aligned}
\min\;&\sum_{(i,j)\in E} w_{ij}\,x_{ij}\\[3pt]
\text{s.t.}\;
&\sum_{j\in V\setminus\{i\}} x_{ij}=1 && \forall i\in V\\
&\sum_{i\in V\setminus\{j\}} x_{ij}=1 && \forall j\in V\\
&\sum_{i\in S}\sum_{j \in S} x_{ij}\le|S|-1 && \forall S\subset V,\;2\le|S|\le n{-}1\\
&x_{ij}\in\{0,1\} && \forall(i,j)\in E
\end{aligned}
\tag{DFJ}\label{eq:dfj}
\end{equation}
\end{minipage}%
\hfill%
\begin{minipage}[t]{0.48\textwidth}
\vspace{-8pt}
\begin{equation}
\begin{aligned}
\min\;&\sum_{(i,j)\in E} w_{ij}\,x_{ij}\\[3pt]
\text{s.t.}\;
&\sum_{j\in V\setminus\{i\}} x_{ij}=1 &&\forall i\in V\\
&\sum_{i\in V\setminus\{j\}} x_{ij}=1 &&\forall j\in V\\
&u_i-u_j+n\,x_{ij}\le n-1 &&\forall i,j\in V\setminus\{1\},\; i\neq j\\
&u_1=1 \\
&2\le u_i\le n &&\forall i\in V \setminus\{1\}\\
&x_{ij}\in\{0,1\} &&\forall(i,j)\in E
\end{aligned}
\tag{MTZ}\label{eq:mtz}
\end{equation}
\end{minipage}\normalsize
\vspace{4pt}

%% file: sections/4_reformulations.tex
\section{Formalizing Reformulations}

Until now, we have used the term \emph{reformulation} informally. It has an intuitive operational meaning: $\mathcal{M}'$ is a reformulation of $\mathcal{M}$ if one can map an instance $\mathcal{M}(p)$ to an instance $\mathcal{M}'(p')$, solve $\mathcal{M}'(p')$, and efficiently recover an optimal solution to $\mathcal{M}(p)$. Importantly, this is a \emph{formulation-level} claim: the construction must work for \emph{all} problem instances $p \in \mathcal{P}$.

In Section~\ref{sec:existing-defs}, we review an existing notion of reformulation capturing this intuition and discuss why this definition is difficult to verify computationally. This limitation has led to proxies for reformulation that are easier to check computationally but lack formulation-level guarantees (Section \ref{sec:proxy-defs}). Our key idea is to utilize tools from formal verification to make formulation-level claims machine-checkable. To this end, we propose a constructive definition of reformulation that is amenable to formalization and tractable for ATP (Section \ref{sec:constructive-def}).

\subsection{Existing Definition}
\label{sec:existing-defs}

\citet{audet1997} capture our intuitive notion of reformulation with a complexity-theoretic definition inspired by polynomial-time Turing reductions \cite{garey2009}.

\subimport{definitions/}{audet}

\begin{remark}
To avoid trivial or vacuous cases, we focus on settings in which solving each
formulation is NP-hard and each formulation has at least one feasible instance. The polynomial-time restriction on the optimal-solution mapping excludes trivial mappings that solve $\mathcal{M}(p)$ directly.
\end{remark}

\paragraph*{TSP Example.}
The formulation $\mathcal{M}_{\text{MTZ}}$ is an Audet reformulation of $\mathcal{M}_{\text{DFJ}}$. The mapping $\Phi_{\mathrm{p}}$ is the identity as both formulations use the same parameter space $\mathcal{P}_{\text{TSP}}$. Given any optimal solution of $\mathcal{M}_{\text{MTZ}}(\Phi_{\mathrm{p}}(p))$, we can obtain an optimal solution to $\mathcal{M}_{\text{DFJ}}(p)$ by mapping $x_{ij}$ to itself and dropping the position variables $u_i$, a linear time operation.

This definition captures a formulation-level notion of reformulation as desired. However, it is difficult to verify computationally since it quantifies over all instances $p \in \mathcal{P}$. Furthermore, ATP must identify a mapping that sends every optimal solution of $\mathcal{M}'(\Phi_{\mathrm{p}}(p))$ to an optimal solution of $\mathcal{M}(p)$. Our constructive definition of reformulation in Section \ref{sec:constructive-def} is designed to lighten the demands on ATP.

\subsection{Proxy Definitions and Limitations}
\label{sec:proxy-defs}

Because formulation-level verification is difficult, existing methods use instance-level proxies. For a fixed pair of instances, a proxy checks whether $\mathcal{M}'(p')$ is a reformulation of $\mathcal{M}(p)$. The simplest proxy solves $\mathcal{M}(p)$ and $\mathcal{M}'(p')$ and compares the optimal objective values~\cite{ahmaditeshnizi2024, huang2025orlm, liang2026llm}. \citet{zhai2025a} propose a stronger proxy, Quasi-Karp equivalence\footnote{Note that this definition is directional despite the \emph{equivalence} terminology.}, inspired by Karp reductions~\cite{karp1972} in complexity theory.

\begin{definition}[Quasi-Karp Equivalence~\cite{zhai2025a}]
\label{def:quasi-karp-equivalence}
Let $\mathcal{M}(p)$ and $\mathcal{M}'(p')$ be two instantiated formulations. We say $\mathcal{M}'(p')$ is \textit{Quasi-Karp equivalent} to $\mathcal{M}(p)$
if there exists an algorithm $\mathcal{A}(\mathcal{M}(p),\mathcal{M}'(p'))$ that produces a mapping $f$ such that:
\begin{itemize}[topsep=2pt,parsep=0pt,leftmargin=2em]
    \item If $x^*$ is an optimal solution to $\mathcal{M}'(p')$, then $f(x^*)$ is an optimal solution to $\mathcal{M}(p)$,
    \item $f$ can be computed in polynomial time, and
    \item $\mathcal{A}(\mathcal{M}(p),\mathcal{M}'(p'))$ runs in polynomial time for all $p \in \mathcal{P}$ and $p' \in \mathcal{P}'$.
\end{itemize}
\end{definition}

\textit{Quasi-Karp equivalence} is an instance-level analogue of Audet reformulation. \citet{zhai2025a} implement this idea in EquivaMap, where an LLM proposes a mapping $f$, which is then validated on the particular solved instance. This mapping-based view is more expressive than comparing the objective values alone, allowing EquivaMap to handle some objective transformations and solution mappings on a fixed instance. However, it still does not certify at the formulation level (across all instances), and its effectiveness depends on the class of candidate mappings \footnote{To ensure $f$ is computable in polynomial time, EquivaMap restricts $f$ to be linear.} for $f$ and the LLM's ability to find the map.

\paragraph*{Pitfalls of Instance-Level Verification.} 
The distinction between instance- and formulation-level verification is critical in automated optimization modeling. 
A generated formulation is meant to be reused on unseen problem data. An instance-level check can validate a reformulation that is invalid at the formulation level, resulting in errors on unseen instances. For example, EvoCut~\cite{yazdani2025} uses LLMs to propose cutting planes (cuts) for MILP formulations (Appendix \ref{sec:evocut}). A valid cut for formulation $\mathcal{M}$ may remove feasible points in the linear relaxation of $\mathcal{M}$ while retaining all integer-feasible points. The EvoCut method validates candidate cuts with a simple execution-based proxy. As such, the proposed cuts may be invalid on unseen instances. The proposed cut \eqref{eq:tsp-v1-ec3} illustrates this risk (Proposition \ref{prop:tsp-v1-ec3}). This cut eliminates triangles involving the first node. This is acceptable for an instance with $n > 3$ nodes. However, consider the 3-node instance depicted in Figure \ref{fig:tsp} with feasible tour $1 \to 2 \to 3 \to 1$. The only feasible tour is eliminated by the cut.

\begin{figure}[h]
\centering
\begin{subfigure}[b]{0.33\linewidth}
  \phantomsubcaption\label{fig:tsp}
\end{subfigure}
\begin{subfigure}[b]{0.66\linewidth}
  \phantomsubcaption\label{fig:constructive-reformulation}
\end{subfigure}
\includegraphics[width=0.80\linewidth]{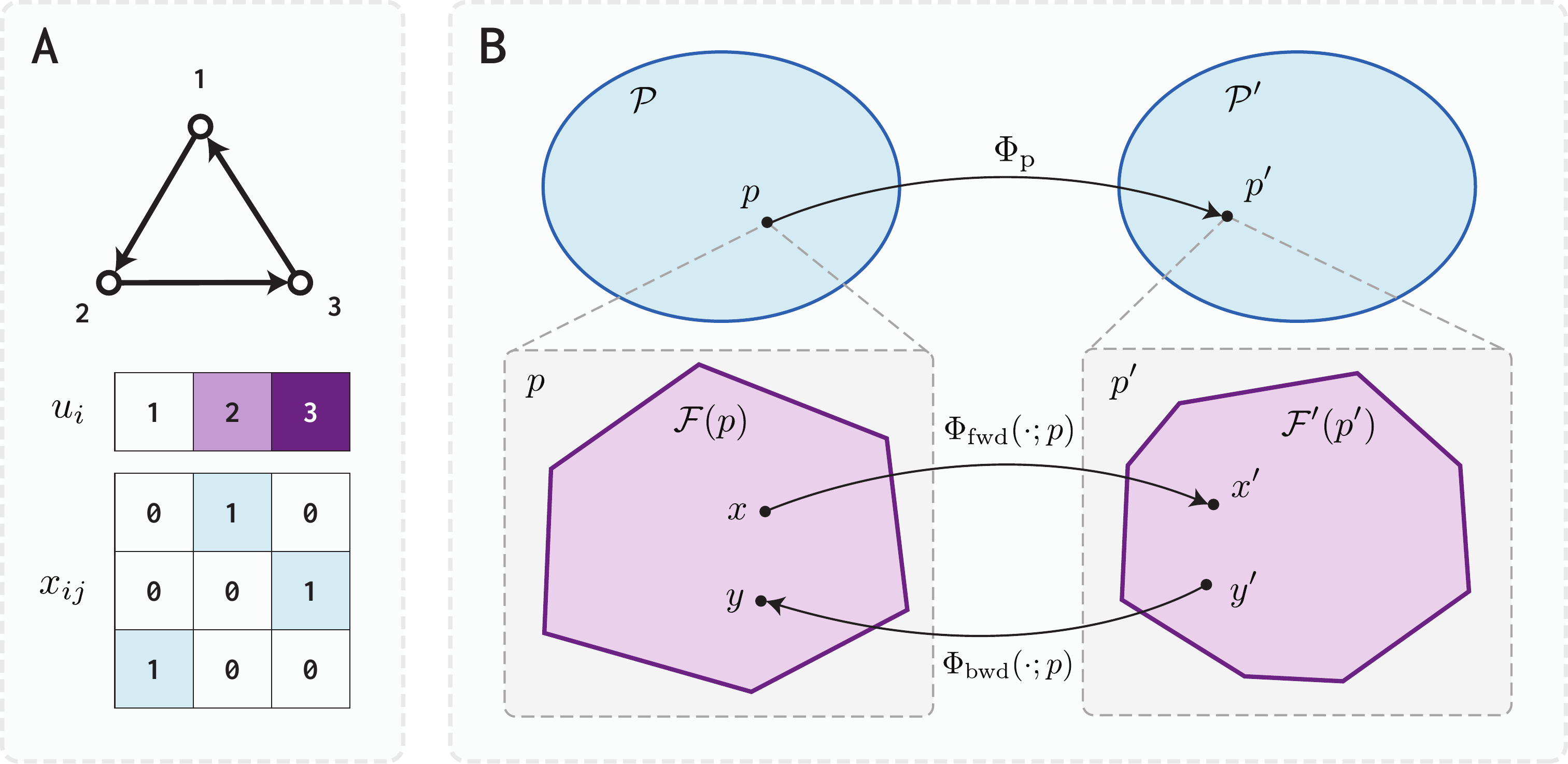}
\captionsetup{font=footnotesize, labelfont=bf}
\caption{(a) A feasible tour $1 \to 2 \to 3 \to 1$ on a 3-node TSP instance with the MTZ variables $x_{ij}$ and $u_i$. (b) A graphical depiction of the reformulation construction $\Phi(\mathcal{M}, \mathcal{M}')$. The parameter mapping $\Phi_{\mathrm{p}}$ yields a pair of feasible regions $\mathcal{F}(p)$ and $\mathcal{F}'(p')$ related by the forward and backward maps such that objective ordering is preserved.}
\end{figure}

\subsection{Constructive Definition}
\label{sec:constructive-def}

To overcome the limitations of instance-level reformulation proxies, we use formal verification to machine-check reformulations at the formulation level. Audet reformulation can be formalized and machine-checked, but it requires ATP to identify a mapping that sends every optimal solution of $\mathcal{M}'(\Phi_{\mathrm{p}}(p))$ to an optimal solution of $\mathcal{M}(p)$. To ease the burden on ATP, we propose a stronger, constructive definition of reformulation that requires explicit forward and backward mappings between feasible regions that preserve objective ordering, together with a strictly increasing objective map (see Figure~\ref{fig:constructive-reformulation}). These conditions satisfy the Audet reformulation (Proposition \ref{prop:constructive-audet}) but do not require ATP to certify the optimality of solutions in the image of the solution mapping.

\subimport{definitions/}{constructive}

\begin{remark}
\label{rem:poly-time}
Like Audet reformulation, we use this definition for settings in which solving each formulation is NP-hard and each formulation has at least one feasible instance.
\end{remark}

This definition is designed to capture common MILP transformations (e.g., lifting, rescaling, and substitution) and modeling choices (e.g., exponential constraints like the TSP \eqref{eq:dfj} formulation). A related definition by Sherali requires a much stronger order-preserving bijective mapping between the two feasible regions that excludes common transformations (e.g., lifting)~\cite{liberti2009}.

We next relate constructive reformulation to the two notions above. We prove that a \emph{constructive reformulation} is an Audet reformulation and, as an immediate corollary at the instance level, constructive reformulation also implies Quasi-Karp equivalence. We defer the proof to Appendix \ref{sec:proofs}.

\begin{restatable}[]{proposition}{constructiveaudet}
\label{prop:constructive-audet}
Let $\mathcal{M}$ and $\mathcal{M}'$ be formulations. If $\mathcal{M}'$ is a \emph{constructive reformulation} of $\mathcal{M}$, then $\mathcal{M}'$ is an Audet reformulation of $\mathcal{M}$. Furthermore, for any instance $p \in \mathcal{P}$ with $p' = \Phi_{\mathrm{p}}(p)$, if $\mathcal{M}'(p')$ is a \emph{constructive reformulation} of $\mathcal{M}(p)$, then $\mathcal{M}'(p')$ is \emph{Quasi-Karp equivalent} to $\mathcal{M}(p)$.
\end{restatable}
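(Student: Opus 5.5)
The argument unfolds the constructive definition. A constructive reformulation $\Phi(\mathcal{M},\mathcal{M}')$ supplies a parameter map $\Phi_{\mathrm{p}}$. For each $p$ with $p'=\Phi_{\mathrm{p}}(p)$ it also supplies a forward map $\phi_p:\mathcal{F}(p)\to\mathcal{F}'(p')$ and a backward map $\psi_p:\mathcal{F}'(p')\to\mathcal{F}(p)$, together with a strictly increasing objective map $h_p$. The maps are required to satisfy $f_0'(\phi_p(x);p') \le h_p(f_0(x;p))$ and $h_p(f_0(\psi_p(y);p)) \le f_0'(y;p')$, or whatever form of objective-ordering preservation the definition states. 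For the Audet reformulation I will take $\Phi_{\mathrm{p}}$ itself as the instance map and $\psi_p$ as the optimal-solution map. The polynomial-time requirements (computability of $\Phi_{\mathrm{p}}$ and the maps) are assumed to be part of the constructive definition, in line with Remark~\ref{rem:poly-time}, so they transfer directly.

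The core step is to show that $\psi_p$ sends optimal solutions to optimal solutions. Let $y^*$ be optimal for $\mathcal{M}'(p')$, and set $x^*=\psi_p(y^*)\in\mathcal{F}(p)$, which is feasible by the backward feasibility condition. Take any $x\in\mathcal{F}(p)$. Then $\phi_p(x)\in\mathcal{F}'(p')$, so optimality of $y^*$ gives $f_0'(y^*;p')\le f_0'(\phi_p(x);p')$. Chaining the two objective conditions gives
\[
h_p(f_0(x^*;p)) \le f_0'(y^*;p') \le f_0'(\phi_p(x);p') \le h_p(f_0(x;p)).
\]
Because $h_p$ is strictly increasing, this yields $f_0(x^*;p)\le f_0(x;p)$, so $x^*$ is optimal. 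We also need the existence direction: if $\mathcal{M}(p)$ has an optimum, then $\mathcal{M}'(p')$ must have one as well, or at least the solution map must make sense whenever $\mathcal{M}'(p')$ is solved. This follows by the symmetric chain using $\phi_p$ applied to an optimal $x$: for any $y\in\mathcal{F}'(p')$ we get $h_p(f_0(x))\le h_p(f_0(\psi_p(y)))\le f_0'(y)$, although this only bounds the objective of $\mathcal{M}'(p')$ from below. I expect the main obstacle to be exactly this bookkeeping about existence and feasibility of optima, together with matching the precise inequalities in the definition. If the definition states equality-type conditions instead, the same chain goes through more easily.

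For the second claim, fix $p$ and $p'$ and let the algorithm $\mathcal{A}$ simply output $f=\psi_p$. Since $\psi_p$ is part of the given construction, $\mathcal{A}$ runs in polynomial time. The argument above shows that $f$ maps optimal solutions of $\mathcal{M}'(p')$ to optimal solutions of $\mathcal{M}(p)$, and $f$ is polynomial-time computable by the assumption on $\psi_p$. This is just the instance-level specialization of the first part.
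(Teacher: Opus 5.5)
The genuine gap is the existence clause of Audet's definition. If $\mathcal{M}(p)$ has an optimal solution $x^*$, you must show that $\mathcal{M}'(p')$ also \emph{has} an optimal solution. Your chain only gives the lower bound $f_0'(y;p') \ge \Phi_{\text{obj}}(f_0(x^*;p))$ for all $y \in \mathcal{F}'(p')$, and you concede this. A lower bound does not rule out an unattained infimum. Your fallback (``or at least the solution map must make sense whenever $\mathcal{M}'(p')$ is solved'') is also not available. Definition~\ref{def:audet} states the existence requirement explicitly, and without it the clause ``every optimal solution of $\mathcal{M}'(p')$ maps back'' could hold vacuously.

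The missing observation is that the bound is attained by the forward image of $x^*$. By forward feasibility, $\Phi_{\text{fwd}}(x^*;p) \in \mathcal{F}'(p')$, and for every $y \in \mathcal{F}'(p')$,
\[
f_0'(\Phi_{\text{fwd}}(x^*;p);p') = \Phi_{\text{obj}}(f_0(x^*;p)) \le \Phi_{\text{obj}}\bigl(f_0(\Phi_{\text{bwd}}(y;p);p)\bigr) = f_0'(y;p').
\]
The first equality is objective preservation (1). The inequality uses optimality of $x^*$ against the feasible point $\Phi_{\text{bwd}}(y;p)$, together with monotonicity of $\Phi_{\text{obj}}$. The last equality is objective preservation (2). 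Hence $\Phi_{\text{fwd}}(x^*;p)$ is optimal for $\mathcal{M}'(p')$; this is exactly the paper's first step, there argued by contradiction. Under your hypothesized inequality form, the same fix works by prepending $f_0'(\phi_p(x^*);p') \le h_p(f_0(x^*;p))$ to your chain.

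The rest of your proposal is correct and matches the paper. Your direct chain showing that $\Phi_{\text{bwd}}$ sends optima to optima is the paper's ``identical argument'' without the contradiction, and your Quasi-Karp step is the paper's.

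One minor correction: the definition does not make $\Phi_{\mathrm{p}}$ polynomial-time; its footnote explicitly exempts it, and Remark~\ref{rem:poly-time} concerns NP-hardness rather than complexity of the maps. Nothing needs that assumption anyway, since Audet's definition imposes no complexity condition on the parameter map. The only complexity fact you use is the polynomial-time computability of $\Phi_{\text{bwd}}$.
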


Our definition is stronger than Audet's because it requires mappings for \emph{all} feasible points, not only optima. This stronger notion of reformulation provides a more structured claim that reduces the proof burden on ATP while remaining satisfied by common MILP transformations. For the remainder of the paper, we use reformulation to mean \emph{constructive reformulation}.

%% file: definitions/audet.tex
\begin{definition}[Audet Reformulation~\cite{audet1997}]
\label{def:audet}
Let $\mathcal{M}$ and $\mathcal{M}'$ be two formulations with parameter spaces $\mathcal{P}$ and $\mathcal{P}'$.
$\mathcal{M}'$ is an \emph{Audet reformulation} of $\mathcal{M}$ if there exists a mapping $\Phi_{\mathrm{p}} : \mathcal{P} \to \mathcal{P}'$ such that, for any instance $p \in \mathcal{P}$: if $\mathcal{M}(p)$ has an optimal solution, then $\mathcal{M}'(\Phi_{\mathrm{p}}(p))$ also has an optimal solution, and every optimal solution to $\mathcal{M}'(\Phi_{\mathrm{p}}(p))$ can be mapped back to an optimal solution of $\mathcal{M}(p)$ in polynomial time.
\end{definition}

%% file: definitions/constructive.tex
\begin{definition}[Constructive Reformulation]
\label{def:constructive}
Let $\mathcal{M} = (\mathcal{P}, \mathcal{F}, f_0)$ and $\mathcal{M}' = (\mathcal{P}', \mathcal{F}', f'_0)$ be formulations. A \emph{reformulation construction} from $\mathcal{M}$ to $\mathcal{M}'$ is a tuple $\Phi(\mathcal{M}, \mathcal{M}') = (\Phi_{\mathrm{p}},\, \Phi_{\text{fwd}},\, \Phi_{\text{bwd}},\, \Phi_{\text{obj}})$ consisting of:
\begin{itemize}[topsep=2pt,parsep=0pt,leftmargin=2em]
    \item a parameter mapping $\Phi_{\mathrm{p}} : \mathcal{P} \to \mathcal{P}'$,
    \item a forward mapping $\Phi_{\text{fwd}}(\cdot;p) : \mathbb{R}^{n(p)} \to \mathbb{R}^{n'(\Phi_{\mathrm{p}}(p))}$,
    \item a backward mapping $\Phi_{\text{bwd}}(\cdot;p) : \mathbb{R}^{n'(\Phi_{\mathrm{p}}(p))} \to \mathbb{R}^{n(p)}$ computable in polynomial time,
    \footnote{Polynomial time is measured under fixed binary encodings of parameters and rational/integer assignments: $\Phi_{\mathrm{bwd}}$ is polynomial-time computable if a single deterministic algorithm, given $p, \Phi_{\text{p}}(p),$ and $x'$, outputs $\Phi_{\mathrm{bwd}}(x';p)$ in time polynomial in the total input bit length. Supplying $\Phi_{\text{p}}(p)$ as input means this condition does not impose
a polynomial-time requirement on $\Phi_{\text{p}}$.} and
    \item an objective mapping $\Phi_{\text{obj}} : \mathbb{R} \to \mathbb{R}$.
\end{itemize}
$\mathcal{M}'$ is a \emph{constructive reformulation} of $\mathcal{M}$ if there exists a reformulation construction satisfying the following conditions for every instance $p \in \mathcal{P}$, with $p' = \Phi_{\mathrm{p}}(p)$:
\begin{itemize}[topsep=2pt,parsep=0pt,leftmargin=2em]
    \item \textbf{Forward feasibility.} For all $x \in \mathcal{F}(p)$: $\Phi_{\text{fwd}}(x;p) \in \mathcal{F}'(p')$.
    \item \textbf{Backward feasibility.} For all $x' \in \mathcal{F}'(p')$: $\Phi_{\text{bwd}}(x';p) \in \mathcal{F}(p)$.
    \item \textbf{Strictly monotone objective mapping.} $\Phi_{\text{obj}}$ is strictly monotonically increasing.
    \item \textbf{Objective preservation.} (1) For all ${x \in \mathcal{F}(p)}$, $f_0'(\Phi_{\text{fwd}}(x;p);p') = \Phi_{\text{obj}}(f_0(x;p))$, and (2) for all ${x' \in \mathcal{F}'(p')}$, $f_0'(x';p') = \Phi_{\text{obj}}(f_0(\Phi_{\text{bwd}}(x';p);p))$.
\end{itemize}
We say $\mathcal{M}'(p')$ is a \emph{constructive reformulation} of $\mathcal{M}(p)$ if the conditions hold for $p \in \mathcal{P}$ and $p' \in \mathcal{P}'$.
\end{definition}

%% file: sections/5_methodology.tex
\section{Methodology}
\label{sec:methodology}

This section operationalizes our reformulation definition into an automated verification pipeline. We first formalize our definition in Lean and then introduce \texttt{FLARE}, which uses ATP to generate machine-checkable certificates. Finally, we introduce \texttt{FLARE-NL}, a fast and cheap LLM proxy that reasons about reformulations using only natural language rather than Lean.

\paragraph*{Formalization.}
Our reformulation definition is a formulation-level claim quantified over every instance $p \in \mathcal{P}$. To make it machine-checkable, we encode the claim in Lean, which reasons symbolically over arbitrary instances rather than testing concrete data. We formalize our definition in Lean~\cite{moura2021}, chosen for its mature Mathlib library and extensive ATP tooling (see Appendix \ref{sec:implementation}). We provide Lean formalizations of \emph{formulation} (Definition \ref{def:formulation}) and \emph{reformulation} (Definition \ref{def:constructive}) in Appendix \ref{sec:lean-formalization}. 
We assume the input formulations are well-formed MILPs (i.e., have linear constraints and objective, and are defined over real-valued decision variables). Accordingly, our Lean formalization does not explicitly encode these conditions. The only part of Definition~\ref{def:constructive} we omit from the Lean proof obligation is the polynomial-time requirement on $\Phi_{\mathrm{bwd}}$. Formalizing computational complexity in Lean would add substantial proof burden, and in our experiments, the backward map in each construction was always computable in polynomial time.

\paragraph*{\texttt{FLARE}.}
\texttt{FLARE} determines if $\mathcal{M}'$ is a reformulation of $\mathcal{M}$ under a fixed parameter mapping $\Phi_{\mathrm{p}}$ in the following stages: (1) autoformalize $\mathcal{M}$, $\mathcal{M}'$, and $\Phi_{\mathrm{p}}$ into Lean, (2) attempt to prove $\mathcal{M}'$ is a reformulation of $\mathcal{M}$ under $\Phi_{\mathrm{p}}$ using ATP, and (3) check if a reformulation certificate was successfully generated by ATP. In the first two stages, we utilize the Claude Code\footnote{Alternative agent harnesses are considered in Appendix \ref{sec:addtional-results}.} agent harness for autoformalization and ATP. Existing ATP methods, like the Numina-Lean-Agent~\cite{liu2026e}, demonstrate the use of Claude Code as a competitive ATP method when combined with the Lean-LSP-MCP~\cite{breen2025,liu2026e,ju2026a}. We further augment Claude Code with custom agent skills for handling MILP formulations in Lean. In the final stage, we check if a proof was generated and ensure it compiles without using \texttt{sorry}\footnote{The \texttt{sorry} tactic can be used in Lean proofs to skip a proof obligation. Any proof relying on \texttt{sorry} is hence incomplete.} and without introducing new axioms. This workflow is summarized in Figure \ref{fig:flare} and additional implementation details can be found in Appendix \ref{sec:implementation}.

\paragraph*{Limitations.}
\label{sec:flare-limitations}
\texttt{FLARE}'s guarantee is conditional on faithful formalization. If the formulations and parameter mapping are formalized correctly, then a Lean-verified certificate proves the reformulation claim under our definition. However, failure to generate a certificate is inconclusive. We highlight three limitations.

\begin{itemize}[topsep=2pt,parsep=0pt,leftmargin=1em]
    \item \textbf{No certificate of invalidity.} If $\mathcal{M}'$ is not a reformulation of $\mathcal{M}$ under $\Phi_{\mathrm{p}}$, \texttt{FLARE} does not prove invalidity; the ATP component simply refuses to certify validity of the reformulation. Moreover, invalidity is scoped to the fixed mapping: it is not evidence that no parameter mapping makes $\mathcal{M}'$ a reformulation of $\mathcal{M}$.

    \item \textbf{Dependence on faithful formalization.} If the formalizations are unfaithful to $\mathcal{M}$, $\mathcal{M}'$, or $\Phi_{\mathrm{p}}$, \texttt{FLARE} may certify a statement that differs from the intended claim. We conduct a human audit of our dataset (Section \ref{sec:formulation-bench}) and an LLM-as-a-judge audit of every formalization produced during our main \texttt{FLARE} experiment. A selection of \texttt{FLARE} formalizations were human-reviewed to check the correctness of the LLM judge. Both audits observe no instance of this failure mode. Furthermore, this concern can be eliminated by deterministically translating from a modeling standard such as AMPL~\cite{fourer2003} or MathOptInterface~\cite{legat2021} to Lean.
    
    \item \textbf{False negatives from theorem proving.} If the LLM is unable to identify an appropriate reformulation construction or the proof requires extensive work in Lean to formalize, the agent may exit early, producing a false negative. Ongoing efforts to formalize computer science foundations in Lean with CSLib may reduce this burden on ATP.
    
\end{itemize}

\paragraph*{\texttt{FLARE-NL}.} Generating a reformulation certificate using \texttt{FLARE}  can take 5-10 minutes. As a fast and cheap proxy, \texttt{FLARE-NL} prompts a frontier reasoning model with templated \LaTeX{} descriptions of two formulations, $\mathcal{M}$ and $\mathcal{M}'$, along with the parameter mapping $\Phi_{\mathrm{p}}$, and asks if $\mathcal{M}'$ is a reformulation of  $\mathcal{M}$ under $\Phi_{\mathrm{p}}$ (see Appendix \ref{sec:prompts} for full prompt). \texttt{FLARE-NL} differs from the naive-LLM baseline of \citet{zhai2025a} in three ways: (1) the prompt includes our definition of reformulation, (2) the prompt explicitly states all formulation assumptions and instructs not to introduce new ones, and (3) we utilize reasoning and structured output. Unlike \texttt{FLARE}, \texttt{FLARE-NL} lacks any verifiable guarantees on its output. It can serve as a screening heuristic before \texttt{FLARE}, but it is not a proof method.

%% file: sections/6_experiments.tex
\section{Experiments}
\label{sec:experiments}

To evaluate if formulation-level verification catches failures missed by instance-level proxies, we introduce FormulationBench, an extension of the EquivaFormulation dataset \cite{zhai2025a} containing more challenging formulation-level cases. We evaluate \texttt{FLARE} and \texttt{FLARE-NL} against established baselines and conduct an ablation study on \texttt{FLARE-NL}.\footnote{All experimental code is available at \url{https://github.com/henryrobbins/flare}.} Additional experimental details are provided in Appendix \ref{sec:experimental-details}.

\subsection{FormulationBench}
\label{sec:formulation-bench}

We introduce FormulationBench, a benchmark designed to test formulation-level reformulation reasoning. FormulationBench extends EquivaFormulation~\cite{zhai2025a} with candidate cuts proposed by EvoCut~\cite{yazdani2025} and a collection of eight MILP formulation pairs from~\citet{ferchtandiker2025}. These formulation pairs are more challenging than those in EquivaFormulation, requiring reasoning about general cutting plane families and meaningfully different modeling techniques. The dataset contains 20 problems, 109 formulations, and 89 formulation pairs with 63 positive and 26 negative examples (Appendix \ref{sec:formulation-bench-details}). Following Remark \ref{rem:poly-time}, we restrict our evaluation to the subset of 16 NP-hard problems containing 54 formulation pairs. FormulationBench is available via the \texttt{formulation-bench} Python package.\footnote{Documentation is available at \url{https://formulation-bench.henryrobbins.com}}

FormulationBench is organized similarly to the NLP4LP dataset \cite{ahmaditeshnizi2024}; JSON files contain formulation descriptions and problem instance data. Relative to NLP4LP, FormulationBench introduces two notable extensions. First, we explicitly record implicit assumptions necessary to prove reformulation validity. We flag assumptions implicit in the source dataset to facilitate our ablation study in Section \ref{sec:ablation}. Second, we formalize each formulation in Lean (see Appendix \ref{sec:lean-formalization}) and include ground-truth reformulation certificates for each valid reformulation pair. The nine TSP formulations were formalized by hand and used as reference examples when autoformalizing the remainder with Claude Code. All 109 Lean formalizations were then reviewed line-by-line by a PhD student specializing in operations research, who found no autoformalization errors. While preparing this dataset, we verified prior AI-generated MILP reformulations and found 5 invalid cutting planes proposed by EvoCut \cite{yazdani2025} and 4 invalid reformulations proposed in \citet{ferchtandiker2025} (see Appendix \ref{sec:invalid-reformulations}).

\subsection{Baseline Comparison}
\label{sec:baseline}

We compare \texttt{FLARE}  and \texttt{FLARE-NL} against the baselines evaluated by \citet{zhai2025a}. We exclude graph-related methods due to their poor performance on non-trivial transformations~\citep{zhai2025a}. 
To ensure comparability, every LLM-based method receives the same list of all explicit and implicit assumptions from the problem statement.

\begin{itemize}[topsep=2pt,parsep=0pt,leftmargin=2em]
    \item \textbf{Execution \cite{ahmaditeshnizi2024}.} Solves both formulations on a fixed problem instance and compares their optimal objective values.
    \item \textbf{EquivaMap \cite{zhai2025a}.} We re-implement EquivaMap to allow for mappings over non-scalar variables. The original mapping prompt is unmodified. We use Opus 5 as the mapping LLM.
\end{itemize}

\paragraph*{Performance.}
Table~\ref{tab:results} summarizes performance on the 54-pair NP-hard subset of FormulationBench. Our methods outperform both existing baselines. \texttt{FLARE} achieves \textbf{100\%} accuracy and is the only method that generates machine-checkable reformulation certificates. \texttt{FLARE-NL} also achieves \textbf{100\%} accuracy. Although it produces no certificate, \texttt{FLARE-NL} is \textbf{30x} faster and \textbf{25x} cheaper than \texttt{FLARE}. Both execution and EquivaMap's errors reflect limitations of instance-level validation; see Table~\ref{tab:errors} for a systematic breakdown.

\input{tables/main_experiment}

\paragraph*{Failure Modes of Instance-Level Validation.}
Table~\ref{tab:errors} summarizes errors by transformation type. Existing instance-level methods fail to catch formulation-level modeling errors, such as transformations (1) and (2), where a transformation can appear valid on the tested instance while failing as a general reformulation. EquivaMap's solution-mapping approach handles transformations (3) and (4), showing its advantage over the execution heuristic, but fails on category (5), where valid reformulations require nonlinear solution mappings. In contrast, \texttt{FLARE}'s formulation-level guarantees eliminate these false positives.

\paragraph*{Limits of ATP.}
For a valid pair of formulations, failure to produce a Lean proof results in a false negative. We do not observe this failure mode in our main experiments with Opus 5. However, we run additional experiments with GPT 5.6 Sol and DeepSeek V4 Pro (see Table \ref{tab:flare}). Both models produce false negatives attributed to ATP failure. The limits of ATP are especially prominent in DeepSeek with 79.6\% accuracy. Although Opus 5 achieves 100\% accuracy, the cost scales with the proof difficulty due to ATP. The cost standard deviation is \$1.22 and the max is \$8.02. Many problems rely on standard combinatorial results that are unavailable in Lean's standard libraries (e.g., flow decomposition). As Lean libraries improve (e.g., CSLib), \texttt{FLARE} can reduce cost by simply invoking such results rather than reproving them.

\input{tables/error_analysis}

\subsection{\texttt{FLARE-NL} Ablation Study}
\label{sec:ablation}
In Table \ref{tab:ablation}, we evaluate ablations of the \texttt{FLARE-NL} prompt to measure the importance of the reformulation definition and assumption handling across three LLMs. The \emph{Baseline} uses the full prompt; \emph{No Definition} omits the definition of reformulation; and \emph{Allow Implicit} removes implicit assumptions from the prompt and permits the model to introduce reasonable assumptions. See Appendix~\ref{sec:prompts} for prompt variants and Appendix~\ref{sec:experimental-details} for additional details.

\paragraph*{Explicit Definitions and Assumptions.} The full \texttt{FLARE-NL} prompt performs the best across all model families, indicating that reformulation verification relies on explicit definitions and assumptions. Removing the reformulation definition consistently reduces accuracy, and allowing the model to reason about implicit assumptions causes the largest degradation for every model family. Verifying reformulations requires explicit criteria and assumptions, rather than model-inferred assumptions.

\paragraph*{Frontier Model Reasoning.} \texttt{FLARE-NL} is most effective when paired with strong frontier reasoning models (also see Appendix \ref{sec:addtional-results}). Under the full prompt, Opus~5, GPT-5.6 Sol, and DeepSeek V4 Pro all achieve high accuracy, with Opus~5 correct on every pair of every run.

\input{tables/ablation}

%% file: tables/main_experiment.tex
\begin{table}[h]
\centering
\captionsetup{font=footnotesize, labelfont=bf}
\caption{Accuracy of automated reformulation checking methods on the FormulationBench dataset. \textbf{Bold} indicates the best method on a metric and \underline{underlined} indicates the second best. Metric cells report mean $\pm$ std. dev. across 3 runs; TP/FP/TN/FN reports totals. All LLM-based methods use Opus 5 with reasoning and medium effort level. Due to the high cost, we only do a single run of \texttt{FLARE}. \texttt{FLARE} and \texttt{FLARE-NL} outperform existing methods and \texttt{FLARE} is the only method that generates a machine-checkable certificate. $^{\dagger}$79.7\% of \texttt{FLARE}'s wall time is LLM inference; the remaining 20.3\% is file IO and querying the Lean LSP.}
\label{tab:results}
\scriptsize
\setlength{\tabcolsep}{4pt}
\begin{tabular}{l@{\hskip 6pt}l@{\hskip 6pt}c@{\hskip 6pt}crrrrrrrrr}
\toprule
\textbf{Method} & \textbf{Model} & \textbf{Certificate} & \textbf{Runs}& \textbf{TP} & \textbf{FP} & \textbf{TN} & \textbf{FN} & \textbf{Precision} & \textbf{Recall} & \textbf{Accuracy} & \textbf{Avg. Time} & \textbf{Avg. Cost} \\
\midrule
Execution \cite{ahmaditeshnizi2024}             & ---       & \ding{55} & 3 & 120 & 21 & 15 &   6 & \cellcolor{light-blue!20}85.1\%$\pm$0.0\% & \cellcolor{light-blue!40}\underline{95.2\%}$\pm$0.0\% & \cellcolor{light-blue!17}\underline{83.3\%}$\pm$0.0\% &   2.1s & --- \\
EquivaMap \cite{zhai2025a}             & Opus 5  & \ding{55} & 3 & 111 & 15 & 21 &  15 & \cellcolor{light-blue!26}\underline{88.1\%}$\pm$0.0\% & \cellcolor{light-blue!26}88.1\%$\pm$0.0\% & \cellcolor{light-blue!13}81.5\%$\pm$0.0\% &   6.1s & \$0.026 \\
\texttt{FLARE}             & Opus 5          & \ding{51} & 1 &  42 &  0 & 12 &   0 & \cellcolor{light-blue!50}\textbf{100.0\%} & \cellcolor{light-blue!50}\textbf{100.0\%} & \cellcolor{light-blue!50}\textbf{100.0\%} &  $^{\dagger}$410.6s & \$1.180 \\
\texttt{FLARE-NL}         & Opus 5  & \ding{55} & 3 & 126 &  0 & 36 &   0 & \cellcolor{light-blue!50}\textbf{100.0\%}$\pm$0.0\% & \cellcolor{light-blue!50}\textbf{100.0\%}$\pm$0.0\% & \cellcolor{light-blue!50}\textbf{100.0\%}$\pm$0.0\% &  13.9s & \$0.048 \\
\bottomrule
\end{tabular}
\end{table}

%% file: tables/error_analysis.tex
\begin{table}[h]
\scriptsize
\centering
\captionsetup{font=footnotesize, labelfont=bf}
\caption{Accuracy of automated reformulation checking methods segmented by challenging transformation types in the FormulationBench dataset. The \emph{Valid} column indicates if the listed transformation results in a valid reformulation. The \emph{Pairs} column gives the number of formulation pairs in each category. Worst Case reports the minimum accuracy over the transformation categories listed above. \texttt{FLARE} and \texttt{FLARE-NL} are the only methods with \textbf{100\%} accuracy in the worst case.}
\label{tab:errors}
\begin{tabular}{lcccccc}
\toprule
\textbf{Transformation} & \textbf{Valid} & \textbf{Pairs} & \textbf{Execution \cite{ahmaditeshnizi2024}} & \textbf{EquivaMap \cite{zhai2025a}} & \textbf{\texttt{FLARE}} & \textbf{\texttt{FLARE-NL} } \\
\midrule
\textbf{1.} Base-10 Representation   & \ding{55} & 2 & 0\%   & 0\%   & \textbf{100\%} & \textbf{100\%} \\
\textbf{2.} Addition of Invalid Cutting Planes & \ding{55} & 3 & 0\%   & 0\%   & \textbf{100\%} & \textbf{100\%} \\
\textbf{3.} Rescaled Objective & \ding{51} & 2 & 0\%   & \textbf{100\%} & \textbf{100\%} & \textbf{100\%} \\
\textbf{4.} Different Formulation (Same Objective) & \ding{55} & 2 & 0\%   & \textbf{100\%} & \textbf{100\%} & \textbf{100\%} \\
\textbf{5.} Non-Linear Solution Maps & \ding{51} & 5 & \textbf{100\%} & 0\%   & \textbf{100\%}  & \textbf{100\%} \\
\midrule
\rowcolor{black!10}
\textbf{Worst Case} & & & 0\% & 0\% & \textbf{100\%} & \textbf{100\%} \\
\bottomrule
\end{tabular}
\end{table}

%% file: tables/ablation.tex
\begin{table}[ht]
\centering
\captionsetup{font=footnotesize, labelfont=bf}
\caption{Ablation study of \texttt{FLARE-NL} across LLM model and prompt variations. \textit{No Definition} omits the reformulation definition from the prompt and \textit{Allow Implicit} removes implicit assumptions from the prompt and permits the model to introduce reasonable assumptions. Metric cells report mean $\pm$ std. dev. across 3 runs; TP/FP/TN/FN reports totals. All models have reasoning enabled and the reasoning effort level is shown following the model name.}
\label{tab:ablation}
\scriptsize
\setlength{\tabcolsep}{4pt}
\begin{tabular}{llrrrrrrrrr}
\toprule
\textbf{Model (Effort)} & \textbf{Variant} & \textbf{TP} & \textbf{FP} & \textbf{TN} & \textbf{FN} & \textbf{Precision} & \textbf{Recall} & \textbf{Accuracy} & \textbf{Avg Time} & \textbf{Avg Cost} \\
\midrule
\multirow{3}{*}{\makecell[l]{Opus 5\\\textit{(medium)}}}
  & Baseline       & 126 & 0 & 36 &  0 & \cellcolor{light-blue!50}\textbf{100.0$\pm$0.0\%} & \cellcolor{light-blue!50}\textbf{100.0$\pm$0.0\%} & \cellcolor{light-blue!50}\textbf{100.0$\pm$0.0\%} & 13.9s & \$0.048 \\
  & No Definition  & 120 & 0 & 36 &  6 & \cellcolor{light-blue!50}\textbf{100.0$\pm$0.0\%} & \cellcolor{light-blue!42}95.2$\pm$0.0\% & \cellcolor{light-blue!44}96.3$\pm$0.0\% & 12.0s & \$0.039 \\
  & Allow Implicit & 121 & 6 & 30 &  5 & \cellcolor{light-blue!42}95.3$\pm$0.1\% & \cellcolor{light-blue!43}96.0$\pm$1.4\% & \cellcolor{light-blue!39}93.2$\pm$1.1\% & 18.4s & \$0.052 \\
\midrule
\multirow{3}{*}{\makecell[l]{GPT-5.6 Sol\\\textit{(medium)}}}
  & Baseline       & 120 & 0 & 36 &  6 & \cellcolor{light-blue!50}\textbf{100.0$\pm$0.0\%} & \cellcolor{light-blue!42}95.2$\pm$0.0\% & \cellcolor{light-blue!44}96.3$\pm$0.0\% & 14.6s & \$0.034 \\
  & No Definition  & 120 & 0 & 36 &  6 & \cellcolor{light-blue!50}\textbf{100.0$\pm$0.0\%} & \cellcolor{light-blue!42}95.2$\pm$0.0\% & \cellcolor{light-blue!44}96.3$\pm$0.0\% & 12.3s & \$0.027 \\
  & Allow Implicit &  93 & 6 & 30 & 33 & \cellcolor{light-blue!40}93.9$\pm$0.0\% & \cellcolor{light-blue!6}73.8$\pm$0.0\% & \cellcolor{light-blue!10}75.9$\pm$0.0\% & 20.7s & \$0.040 \\
\midrule
\multirow{3}{*}{\makecell[l]{DeepSeek V4 Pro\\\textit{(high)}}}
  & Baseline       & 118 & 0 & 36 &  8 & \cellcolor{light-blue!50}\textbf{100.0$\pm$0.0\%} & \cellcolor{light-blue!40}93.7$\pm$1.4\% & \cellcolor{light-blue!42}95.1$\pm$1.1\% & 60.1s & \$0.005 \\
  & No Definition  & 113 & 0 & 36 & 13 & \cellcolor{light-blue!50}\textbf{100.0$\pm$0.0\%} & \cellcolor{light-blue!33}89.7$\pm$1.4\% & \cellcolor{light-blue!37}92.0$\pm$1.1\% & 56.9s & \$0.004 \\
  & Allow Implicit & 100 & 5 & 31 & 26 & \cellcolor{light-blue!42}95.3$\pm$1.5\% & \cellcolor{light-blue!16}79.4$\pm$3.6\% & \cellcolor{light-blue!18}80.9$\pm$2.1\% & 71.2s & \$0.005 \\
\bottomrule
\end{tabular}
\end{table}

%% file: sections/7_conclusion.tex
\section{Conclusion}
\label{sec:conclusion}

LLMs are increasingly used for optimization modeling but lack formal guarantees, motivating methods to automatically verify LLM-generated formulations. We develop \texttt{FLARE}, the first automated approach for producing machine-checkable reformulation certificates. \texttt{FLARE}  combines a constructive definition of reformulation and ATP to verify reformulation claims at the formulation level, while existing methods only operate at the instance level. Furthermore, we introduce \texttt{FLARE-NL}, an LLM proxy that trades formal guarantees for improved speed and cost. Both methods outperform existing baselines and achieve \textbf{100\%} accuracy on the NP-hard subset of FormulationBench.

\paragraph*{Limitations and Future Work.}
Several limitations motivate future work on formally verified AI-driven optimization modeling.

\begin{itemize}[topsep=2pt,parsep=0pt,leftmargin=2em]

\item \textbf{Deterministic formalization}. The \texttt{FLARE} reformulation certificate requires faithful formalization. Deterministically translating accepted MILP modeling standards into Lean eliminates formalization as a source of error.

\item \textbf{Certificates of invalidity.} \texttt{FLARE} currently searches only for certificates of validity. Future work could develop methods for proving that no reformulation construction exists under a fixed parameter map. For current ATP systems, proving this type of non-existence guarantee will likely be substantially more difficult than proving existence by exhibiting an explicit construction.

\item \textbf{Alternative formulation relations.} Our constructive definition requires forward and backward maps over the entire feasible region. Thus, it may reject a reformulation that removes non-optimal feasible points while preserving the optimal feasible points. Future work could formalize weaker, optimality-preserving notions of reformulation. It could also formalize formulation strength as a well-defined relation (e.g., by proving containment between the
projected linear relaxations of two formulations).

\item \textbf{Real-world formulations.} The formulations of FormulationBench are drawn from academic benchmarks. Future work should extend FormulationBench to demonstrate that \texttt{FLARE} and ATP can scale to larger, more complex industrial MILP formulations.

\item \textbf{Verification for automated modeling}. Our results raise questions about how to integrate verification into LLM-driven modeling and automated algorithm design pipelines (e.g., EvoCut \cite{yazdani2025}). A natural design is to use \texttt{FLARE-NL} for rapid screening and \texttt{FLARE} for final certification.

\end{itemize}

%% file: sections/acknowledgements.tex
\section*{Acknowledgements}

We thank the anonymous referees for thoughtful feedback that significantly improved the clarity of the paper.
We also thank Nichie Supatgiat for her contributions to FormulationBench and helpful discussions.
We thank Refine.ink for detailed manuscript feedback.
MU gratefully acknowledges support from
the Office of Naval Research under award N000142412306,
Air Force Office of Scientific Research under award FA9550-26-1-0012,
the Alfred P. Sloan Foundation,
the Stanford Institute for Human-Centered Artificial Intelligence (HAI),
and from IBM Research as a founding member of Stanford Institute for Human-centered Artificial Intelligence.
EV gratefully acknowledges support from the National Science Foundation under award CCF-2338226 and the AI2050 program at Schmidt Sciences.
Any opinions, findings, and conclusions or recommendations expressed in this material are those of the author(s) and do not necessarily reflect the views of these funders.

%% file: appendix/1_constructive_audet_proof.tex
\section{Proof of Proposition \ref{prop:constructive-audet}}
\label{sec:proofs}

\subimport{proofs/}{constructive_audet}

%% file: proofs/constructive_audet.tex
\constructiveaudet*

\begin{proof}
Let ${\mathcal{M} = (\mathcal{P}, \mathcal{F}, f_0)}$ and ${\mathcal{M}' = (\mathcal{P}', \mathcal{F}', f'_0)}$ be formulations. Assume $\mathcal{M}'$ is a constructive reformulation of $\mathcal{M}$. By definition, there exists a reformulation construction $\Phi(\mathcal{M},\mathcal{M}') = (\Phi_{\mathrm{p}}, \Phi_{\text{fwd}}, \Phi_{\text{bwd}}, \Phi_{\text{obj}})$ such that the conditions of Definition \ref{def:constructive} hold. Fix an instance $p \in {\cal P}$ and let $p' = \Phi_p(p)$.

We first show that if $\mathcal{M}(p)$ has an optimal solution, then $\mathcal{M}'(p')$ also has an optimal solution. Let $x^* \in {\cal F}(p)$ be optimal for ${\cal M}(p)$. By forward feasibility, $x' = \Phi_{\text{fwd}}(x^*;p) \in {\cal F}'(p')$. We now claim that $x'$ is optimal for $\mathcal{M}'(p')$. Suppose this was not true. There must exist an $\hat{x}' \in {\cal F}'(p')$ such that $f_0'(\hat{x}') < f_0'(x')$. By backward feasibility, $\hat{x} = \Phi_{\text{bwd}}(\hat{x}';p) \in {\cal F}(p)$. By objective preservation:
$$
f'_0(\hat{x}';p') = \Phi_{\text{obj}}(f_0(\hat{x};p))
\quad\text{and}\quad
f'_0(x';p') =\Phi_{\text{obj}}(f_0(x^*;p)).
$$
Thus,
$$
\Phi_{\text{obj}}(f_0(\hat{x};p)) <
\Phi_{\text{obj}}(f_0(x^*;p)).
$$
Since $\Phi_{\text{obj}}$ is strictly increasing, this implies $f_0(\hat{x};p) < f_0(x^*;p)$ which contradicts the optimality of $x^*$. Using an identical argument by contradiction, we can show that every optimal solution of $\mathcal{M}'(p')$ is mapped to an optimal solution of ${\mathcal{M}}(p)$ by the backward mapping $\Phi_{\text{bwd}}(\cdot;p)$. Definition~\ref{def:constructive} requires $\Phi_{\text{bwd}}(\cdot;p)$ to be computable in polynomial time. Therefore, given any optimal solution of $\mathcal{M}'(p')$, an optimal solution of $\mathcal{M}(p)$ can be recovered in polynomial time. Hence $\mathcal{M}'$ is an Audet reformulation of $\mathcal{M}$.

For the instantiated claim, fix $p$ and $p'=\Phi_p(p)$. Definition~\ref{def:quasi-karp-equivalence} requires a mapping-producing algorithm $\mathcal{A}$. Let $\mathcal{A}(\mathcal{M}(p),\mathcal{M}'(p'))$ output the backward mapping $\Phi_{\mathrm{bwd}}$ with $p$ bound as a constant, representing the mapping
$$
f(x') = \Phi_{\mathrm{bwd}}(x';p).
$$
This only copies $p$ into a fixed-size algorithm, so $\mathcal{A}$ runs in polynomial time for all $p \in \mathcal{P}$ and $p' \in \mathcal{P}'$. By Definition~\ref{def:constructive}, $f$ is polynomial-time computable, and the argument above shows $f$ maps every optimal solution of $\mathcal{M}'(p')$ to an optimal solution of $\mathcal{M}(p)$. Thus $\mathcal{M}'(p')$ is Quasi-Karp equivalent to $\mathcal{M}(p)$.
\end{proof}

%% file: appendix/2_lean_formalization.tex
\section{Lean Formalization}
\label{sec:lean-formalization}

We now formalize the \emph{formulation} (Definition \ref{def:formulation}) and \emph{reformulation} (Definition \ref{def:constructive}) definitions in Lean. For both definitions, we utilize a Lean \texttt{structure}. A \texttt{structure} is a collection of named fields and their types. See Listing \ref{lst:lean-formalizations} for the \texttt{MILPFormulation} and \texttt{MILPReformulation} structures. In the following sections, we provide details on how each definition is formalized.

\subsection{Formulation}

A formulation ${\mathcal{M} = (\mathcal{P}, \mathcal{F}, f_0)}$ is represented by the \texttt{MILPFormulation} structure. The \texttt{Params} field encodes the parameter space $\mathcal{P}$. Next, the \texttt{Vars} and \texttt{feasible} fields together encode the feasible region $\mathcal{F}$. Lastly, \texttt{obj} encodes the objective function $f_0$. Notice that both \texttt{feasible} and \texttt{obj} are functions of \texttt{Params} and \texttt{Vars}. See Listing \ref{lst:tsp-lean-formalization} for an example Lean formalization of the TSP \ref{eq:mtz} formulation.

\paragraph*{Discrepancies.}
This definition does not restrict \texttt{Vars} to $\R^n$ nor does it assert linearity conditions on either \texttt{feasible} or \texttt{obj}. This additional typing would add modeling complexity to \texttt{Vars}, reduce legibility, and place unnecessary burden on ATP. Our primary motivation is not to prove if a MILP formulation is well-formed. Hence, we make the practical decision to exclude this typing.

\subsection{Reformulation}

We formalize reformulation with the \texttt{MILPReformulation} structure. This defines the reformulation construction $\Phi(\mathcal{M},\mathcal{M}') = (\Phi_{\mathrm{p}}, \Phi_{\text{fwd}}, \Phi_{\text{bwd}}, \Phi_{\text{obj}})$ along with the four conditions specified in Definition \ref{def:constructive}. The fields \texttt{paramMap}, \texttt{fwd}, \texttt{bwd}, and \texttt{obj} encode $\Phi_{\mathrm{p}}$, $\Phi_{\text{fwd}}$, $\Phi_{\text{bwd}}$, and $\Phi_{\text{obj}}$ respectively. The fields \texttt{fwd\_feas} and \texttt{bwd\_feas} encode the forward and backward feasibility conditions. The two objective mapping conditions are encoded by \texttt{fwd\_obj} and \texttt{bwd\_obj}. Lastly, \texttt{objMap\_mono} encodes the strict monotonicity objective condition. A formulation \texttt{G} is proven to be a reformulation of \texttt{F} by declaring a definition of type \texttt{MILPReformulation F G}. This structure encodes both the witnessing reformulation construction and proves it obeys the necessary conditions. Proving \texttt{G} is \textit{not} a reformulation of \texttt{F} requires proving the type \texttt{MILPReformulation F G} is uninhabited (i.e., no construction satisfying the conditions exists).

\paragraph*{Discrepancies.}
The Lean formalization does not enforce that the backward mapping $\Phi_{\text{bwd}}$ is computable in polynomial time. Formalizing computational complexity in Lean is non-trivial and adds substantial proof burden to ATP. In our experiments, we observe no examples of LLMs producing non-polynomial-time backward maps, so we om it the condition from the Lean proof obligation.

\begin{lstlisting}[
    style=lean,
    float=h,
    basicstyle=\scriptsize\ttfamily,
    caption={Lean formalizations: \texttt{MILPFormulation} and \texttt{MILPReformulation}.},
    label={lst:lean-formalizations}
]
import Mathlib.Tactic
import Mathlib.Data.Real.Basic
import Mathlib.Order.Basic

structure MILPFormulation where
  Params   : Type                                        -- Parameter space
  Vars     : Params → Type                              -- Variables
  feasible : (p : Params) → Vars p → Prop              -- Feasible region
  obj      : (p : Params) → Vars p → ℝ                 -- Objective function

structure MILPReformulation (F G : MILPFormulation) where
  paramMap    : F.Params → G.Params                    -- Parameter mapping
  fwd         : (p : F.Params) → F.Vars p →           -- Forward mapping
                  G.Vars (paramMap p)
  bwd         : (p : F.Params) → G.Vars (paramMap p) → -- Backward mapping
                  F.Vars p
  fwd_feas    : ∀ p x, F.feasible p x →                -- Forward feasibility condition
                  G.feasible (paramMap p) (fwd p x)
  bwd_feas    : ∀ p x', G.feasible (paramMap p) x' →   -- Backward feasibility condition
                  F.feasible p (bwd p x')
  objMap      : ℝ → ℝ                                  -- Objective mapping
  objMap_mono : StrictMono objMap                      -- Objective monotonicity
  fwd_obj     : ∀ p x, F.feasible p x →                -- Forward objective condition
                  G.obj (paramMap p) (fwd p x) = objMap (F.obj p x)
  bwd_obj     : ∀ p x', G.feasible (paramMap p) x' →   -- Backward objective condition
                  G.obj (paramMap p) x' = objMap (F.obj p (bwd p x'))
\end{lstlisting}

\begin{lstlisting}[
    style=lean,
    float=h,
    basicstyle=\scriptsize\ttfamily,
    caption={An example Lean formalization of the TSP \ref{eq:mtz} formulation.},
    label={lst:tsp-lean-formalization}
]
structure Params where
  n : ℕ                           -- number of cities
  c : Fin n → Fin n → ℝ         -- arc cost
  hn : 2 ≤ n

structure Vars (p : Params) where
  x : Fin p.n → Fin p.n → ℤ     -- arc indicator
  u : Fin p.n → ℝ                -- position

structure Feasible (p : Params) (v : Vars p) : Prop where
  -- Each city has exactly one outgoing arc
  hout : ∀ i : Fin p.n, ∑ j : Fin p.n, v.x i j = 1
  -- Each city has exactly one incoming arc
  hin : ∀ j : Fin p.n, ∑ i : Fin p.n, v.x i j = 1
  -- MTZ subtour elimination
  hmtz : ∀ (i : Fin p.n) (j : Fin p.n), i.val ≠ 0 → j.val ≠ 0 → i ≠ j →
    v.u i - v.u j + (p.n : ℝ) * (v.x i j : ℝ) ≤ (p.n : ℝ) - 1
  -- Depot position fixed to 1
  hu_depot : haveI : NeZero p.n := ⟨by have := p.hn; omega⟩; v.u 0 = 1
  hx_bin : ∀ (i j : Fin p.n), v.x i j = 0 ∨ v.x i j = 1
  -- u ∈ [2, n] for non-depot cities
  hu_lo : ∀ i : Fin p.n, i.val ≠ 0 → 2 ≤ v.u i
  hu_hi : ∀ i : Fin p.n, v.u i ≤ (p.n : ℝ)
  -- No self-loops
  hx_no_self : ∀ i : Fin p.n, v.x i i = 0

-- Minimize total arc cost
def obj (p : Params) (v : Vars p) : ℝ :=
  ∑ i : Fin p.n, ∑ j : Fin p.n, p.c i j * (v.x i j : ℝ)
\end{lstlisting}

%% file: appendix/3_implementation.tex
\section{\texttt{FLARE}  Implementation Details}
\label{sec:implementation}

This section outlines the implementation details for every component of \texttt{FLARE} (Figure \ref{fig:flare}).\footnote{\texttt{FLARE} is implemented in the \texttt{milp-flare} Python package; see \url{https://flare.henryrobbins.com}.} We use a general-purpose coding agent harness (Claude Code, Codex, and OpenCode) for both autoformalizing MILP formulations and ATP. This design decision was inspired by recent work achieving competitive ATP performance with Claude Code~\cite{liu2026e}. In Section \ref{sec:agent-harness}, we describe the agent harness prompt, working directory, and sandbox. To enable the agent to effectively compose Lean proofs, we use the Lean-LSP-MCP (Section \ref{sec:lean-lsp-mcp}) and two custom agent skills (Section \ref{sec:agent-skills}). After the agent exits, \texttt{FLARE}  does a final verification to check if a reformulation certificate was successfully generated (Section \ref{sec:flare-verification}).

\subsection{Agent Harness}
\label{sec:agent-harness}

We programmatically initialize the agent harness in a headless mode using the CLI. The agent is given instructions and provided with a working directory containing the necessary context. To isolate the agent's working directory and avoid duplicating the Lean environment (the Mathlib library is over 5GB), we run \texttt{FLARE} inside a Docker container.

\paragraph*{Agent Prompt.}
In the agent prompt (Listing \ref{lst:flare-agent}), we provide the agent with instructions to (1) formalize both MILP formulations as \texttt{MILPFormulation} structures \texttt{A} and \texttt{B} and (2) attempt to declare a definition of type \texttt{MILPReformulation A B} where \texttt{paramMap} formalizes the fixed parameter map.

\paragraph*{Working Directory Files.}
The working directory is initialized with the following context:
\begin{itemize}[topsep=2pt,parsep=0pt,leftmargin=2em]
    \item templated \LaTeX descriptions of both MILP formulations (Listing \ref{lst:formulation-template}) and the parameter map (Listing \ref{lst:parameter-map-template}),
    \item Gurobi Python implementations for both MILP formulations and the parameter mapping, and
    \item a Lean file \texttt{Common.lean} with \texttt{MILPFormulation} and \texttt{MILPReformulation} definitions.
\end{itemize}

Both the templated \LaTeX{} descriptions and Gurobi Python implementations are populated by the FormulationBench JSON files. The \texttt{formulation-bench} Python package provides utilities to construct Gurobi Python implementations from the JSON file code snippets. See Appendix \ref{sec:formulation-bench-details} for additional details on the FormulationBench dataset structure.

\paragraph*{Docker.}

We pre-build a Docker image called \texttt{flare-agent} with every agent CLI (Claude Code, Codex, and OpenCode), the Lean-LSP-MCP, and a Lake-built Lean environment with Mathlib pre-compiled. When \texttt{FLARE} is invoked, it creates a working directory on the host with all the necessary files. It then creates a Docker container from this image and copies the working directory into the container. To use the pre-built Lean environment, we symlink \texttt{.lake} from the pre-built location into the agent's working directory. This configuration prevents duplication of the Lean environment while ensuring \texttt{FLARE} can run in parallel without agents impacting each other's Lean environment. Previously, we attempted to isolate agents with the permissions and sandboxing mechanisms provided by each agent harness. We found the implementations to be immature; Docker was the only reliable way to ensure the agent couldn't access files outside its working directory.

\subsection{Lean-LSP-MCP}
\label{sec:lean-lsp-mcp}

Lean-LSP-MCP~\cite{dressler2025}\footnote{\url{https://github.com/oOo0oOo/lean-lsp-mcp}} is a Model Context Protocol (MCP) server providing MCP Tools for Lean theorem proving. It enables the agent to efficiently generate, debug, and compile Lean proofs and has been utilized by numerous Lean ATP methods~\cite{breen2025,liu2026e,ju2026a}. The Lean Language Server Protocol (LSP) traditionally allows editors like VS Code to get rich, interactive feedback from a running Lean process. The Lean-LSP-MCP allows the agent to communicate directly with this LSP and obtain feedback like a human theorem prover. It offers a broad range of tools including:

\begin{itemize}[topsep=2pt,parsep=0pt,leftmargin=2em]
    \item \texttt{lean\_goal}. Get the proof goal at a specific location in the file. This allows the agent to observe precisely what sub-goal must be proved to continue making progress.
    \item \texttt{lean\_diagnostic\_messages}. Retrieve all the diagnostic messages for a Lean file. This allows the agent to verify if the file is compiling and, if not, where fixes are necessary.
    \item \texttt{lean\_verify}. Verify the soundness of a proof. There are two conditions where a proof compiles, but is actually unsound: (1) a new \texttt{axiom} was added or (2) the \texttt{sorry} tactic is used to complete a goal (see Appendix \ref{sec:flare-verification} for a further discussion). This tool allows the agent to verify that neither condition holds.
    \item \texttt{lean\_multi\_attempt}. Attempt multiple tactics at a proof position and return the new goal state for each. This allows the agent to explore strategies for making progress in parallel.
    \item \texttt{lean\_hover\_info}. Retrieve documentation for symbols, terms, and expressions. This allows the agent to view underlying definitions and reduces hallucination.
\end{itemize}

\subsection{Agent Skills}
\label{sec:agent-skills}

Agent skills are an ``open format for extending AI agent capabilities with specialized knowledge and workflows.'' They are simply a directory containing context, scripts, or other resources related to a specific task. The directory contains a \texttt{SKILL.md} file which tells the agent the skill's name and provides instructions on when to use it. Skills allow for \emph{progressive disclosure} where the agent loads additional task-specific context as needed.

We define two custom agent skills, \texttt{lean-milp-formulation} and \texttt{lean-milp-reformulation}, for autoformalizing MILP formulations and proving reformulations respectively. The agent prompt explicitly instructs the agent to invoke these skills. Both skills contain instructions along with a template Lean file containing detailed comments about Lean modeling choices and file structure. In addition to our custom skills, we use the general-purpose Lean 4 skills \cite{freer2025}.\footnote{\url{https://github.com/cameronfreer/lean4-skills}}

\subsection{Final Verification}
\label{sec:flare-verification}

After the agent exits, we inspect the working directory. First, we verify the presence of \texttt{A/Formulation.lean}, \texttt{B/Formulation.lean}, and \texttt{Reformulation.lean}. If all of these files are present and compile, we then check that \texttt{Reformulation.lean} contains a \texttt{MILPReformulation A B} definition. Finally, we guard against two conditions where a proof compiles but is unsound: use of the \texttt{sorry} tactic and the introduction of non-standard axioms. The \texttt{sorry} tactic can skip any proof obligation and indicates the proof has yet to be formally verified. We emit \texttt{\#print axioms} for the \texttt{MILPReformulation A B} definition, which reports its transitive axiom dependencies; a \texttt{sorry} surfaces here as \texttt{sorryAx}. We require that the proof depend only on Lean's three standard axioms (\texttt{propext}, \texttt{Classical.choice}, and \texttt{Quot.sound}). If all files compile and the \texttt{MILPReformulation A B} definition is \texttt{sorry}-free and introduces no new axioms, formulation \texttt{B} is deemed a reformulation of \texttt{A} under the fixed parameter mapping.

%% file: appendix/4_prompts.tex
\section{Prompts}
\label{sec:prompts}

\begin{lstlisting}[
    style=jinja,
    basicstyle=\scriptsize\ttfamily,
    caption={\texttt{FLARE}  Agent Prompt},
    label={lst:flare-agent}
]
You are a mixed-integer linear programming (MILP) and Lean 4 expert. You have been tasked with formalizing two MILP formulations in Lean 4 and then proving that formulation B is a reformulation of formulation A.

## Working Directory Structure

The working directory will be initialized with the following structure:

```
  +-- A
  |   +-- Formulation.lean    # Write the Lean 4 formalization of Formulation A here
  |   +-- formulation.md      # Natural language description of Formulation A
  |   \-- solve.py            # Python script that solves Formulation A
  +-- B
  |   +-- Formulation.lean    # Write the Lean 4 formalization of Formulation B here
  |   +-- formulation.md      # Natural language description of Formulation B
  |   \-- solve.py            # Python script that solves Formulation B
  +-- Common.lean             # Definition of `MILPFormulation` and `MILPReformulation`
  +-- Reformulation.lean      # Write the reformulation proof here
  +-- map.md                  # The parameter mapping from Formulation A to Formulation B
  +-- map.py                  # Python script computing B's parameters from A's
  +-- lake-manifest.json      # DO NOT EDIT!
  +-- lakefile.toml           # DO NOT EDIT!
  \-- lean-toolchain          # DO NOT EDIT!
```

## Parameter Map
The parameter mapping of the reformulation construction is *given*, not something you search for. `map.md` states, for each parameter of Formulation B, how it is computed from the parameters of Formulation A; `map.py` is the same mapping as an executable Python script. Read `map.md` before writing either `Formulation.lean` file.

## Workflow
1. Read `map.md` (and `map.py` where the LaTeX is ambiguous) to understand the given parameter mapping.
2. Utilize the `lean-milp-formulation` skill to generate both `Formulation.lean` files.
3. Validate each `Formulation.lean` file with `mcp__lean-lsp__lean_diagnostic_messages`. Fix any issues that arise and repeat until both files compile cleanly.
4. Run `Bash(lake build A.Formulation B.Formulation)` to materialize their oleans.
5. Determine whether formulation B is a mathematical reformulation of formulation A *under the given parameter mapping*.
6. If B is a reformulation of A, use the `lean-milp-reformulation` skill to generate a compiled `MILPReformulation` proof in `Reformulation.lean`, with its `paramMap` field implementing the mapping in `map.md`.
7. Validate `Reformulation.lean` with `mcp__lean-lsp__lean_verify` to ensure there are no stubs. Fix any issues that arise and repeat until both files compile cleanly.

## Rules
- IMPORTANT: Only read/write files that exist in *this* working directory. Do not navigate outside of it for any reason.
- The `paramMap` field of your `MILPReformulation` MUST implement the mapping given in `map.md`. Do not substitute a different parameter mapping, even if another one would make the proof easier. If B is not a reformulation of A under *this* mapping, that is a negative result -- report it as described below rather than switching mappings.
- `map.md` also constrains how you formalize each `Formulation.lean`: B's `Params` structure must have exactly the parameters that `map.md` computes, and A's `Params` must have the parameters `map.md` computes them from. If you cannot express the mapping as a total function `A.Params -> B.Params`, the mismatch is in your formalization -- fix the `Params` structures rather than adapting the mapping.
- DO NOT EDIT `map.md` or `map.py`.
- The Lean project root is the current directory. Use `import A.Formulation` and `import B.Formulation`.
- You MUST use the lean-lsp MCP tools (mcp__lean-lsp__*) to check compilation as you work. Before doing anything else, verify the server is available by calling `mcp__lean-lsp__lean_diagnostic_messages` on `Common.lean`. If the first probe reports the tool as unregistered or still connecting, that is expected -- probe again until it answers, up to 3 attempts. If the *same* server-level failure (e.g. "Failed to start Lean language server", tool not registered) persists, conclude the environment is unusable: write `MCP_UNAVAILABLE: <error>` to Reformulation.lean and exit. Do not fall back to `lake env lean` to perform compilation.
- If `lake build` fails, treat the error message as a real signal -- read it, fix the cause (typically a typo, missing import, or stale olean), and retry. Only after the *same* failure persists across two clean retries should you write `LAKE_BUILD_FAILED: <error>` to Reformulation.lean and exit. Persistent toolchain-level failures (e.g. elan/toolchain missing) are the exit case; ordinary compile errors are not.
- Generate both Formulation.lean files before attempting the reformulation proof.
- Confirm the final reformulation proof with `mcp__lean-lsp__lean_verify` on the `MILPReformulation` definition. The returned axioms must NOT contain `sorryAx` -- if it does, the proof has a stub and you must finish it.
- You are expected to iterate on the proof until it compiles and `lean_verify` reports no `sorryAx`. A non-trivial reformulation proof typically runs 100+ lines with many manual `refine`/`rcases`/`simp` steps and multiple rounds of compile-error fixing. You should only exit before this point if there is concrete evidence that B is *not* a reformulation of A under the given assumptions.

## Common Mistakes
- Interpret `lean_diagnostic_messages` carefully: `success:true, items:[]` means the file compiles cleanly. `success:false, items:[]` typically means imports aren't built yet -- build them, don't assume the file is broken. Real errors come back as `items` with severity/message fields.
- If there issues with the Lean environment or MCP tools, it is imperative to handle them as described in the rules above. Report and exit.
- If you are stuck proving the reformulation due missing assumptions in the `Formulation.lean` files, first verify that the assumptions specified in `formulation.md` are all present in the corresponding `Formulation.lean`. If any assumptions are missing, add them. Otherwise, DO NOT ADD ASSUMPTIONS YOURSELF. Instead, report the missing necessary assumptions as an issue preventing the reformulation proof in `Reformulation.lean` and exit.

## Available Tools
**Filesystem:** `Bash(ls ./*)`, `Bash(find ./*)`,
**Read:** `Read(./*)`, `Bash(cat ./*)`, `Bash(head ./*)`, `Bash(tail ./*)`, `Bash(less ./*)`, `Bash(more ./*)`, `Bash(bat ./*)`
**Edit:** `Edit(./*)`
**Write:** `Write(./*)`
**Skills:** `lean4:lean4`, `lean-milp-formulation`, `lean-milp-reformulation`
**MCP Tools:** `mcp__lean-lsp__*`
**Lake:** `Bash(lake env lean:*)`, `Bash(lake build:*)`
\end{lstlisting}

\begin{lstlisting}[
    style=jinja,
    basicstyle=\scriptsize\ttfamily,
    numbers=left,
    numberstyle=\tiny\color{gray}\texttt,
    numbersep=8pt,
    xleftmargin=2.2em,
    framexleftmargin=1.6em,
    caption={\texttt{FLARE-NL} Prompt. The \textit{No Definition} variant omits L9-27 and drops the $\Phi_{p}$ symbol from L31. The \textit{Allow Implicit} variant omits L38 as well as implicit assumptions and constraints from the formulation descriptions. The formulation descriptions are populated by Listing \ref{lst:formulation-template} and the parameter mapping is populated by Listing \ref{lst:parameter-map-template}.},
    label={lst:llm-equivalence}
]
You are given the following two Mixed-Integer Linear Programming (MILP) formulations. You are tasked with deciding if formulation B is a reformulation of formulation A.

## Formulations

{{ formulation_a }}

{{ formulation_b }}

## Definitions

Use the following definition of formulation and reformulation to guide your reasoning.

**Formulation.** A MILP *formulation* $A$ is a tuple ${A = (P, F, f_0)}$ with parameter space $P$, feasible region $F(p) \subseteq R^{n(p)}$, and objective function $f_0$. For instance $p \in P$, the feasible region $F(p)$ is defined by $m(p)$ linear constraints, $f_i(\cdot ; p) : R^{n(p)} \to R$ for all $i \in [m(p)]$. The first $k(p) \leq n(p)$ variables are integers. The feasible region is
$$F(p) = \{x \in \mathbb{Z}^{k(p)} \times R^{n(p)-k(p)}~|~f_i(x;p) \leq 0 ~\forall i\in[m(p)]\}.$$
The objective is to minimize the linear function $f_0(\cdot ; p) : R^{n(p)} \to R$. A formulation $A$ is *instantiated* with an *instance* $p \in P$. We denote an instantiated formulation as ${A(p) = (F(p), f_0(p))}$.

**Reformulation.** Let $A = (P, F, f_0)$ and $B = (P', F', f'_0)$ be formulations. A *reformulation construction* from $A$ to $B$ is a tuple $\Phi(A, B) = (\Phi_{p},\, \Phi_{fwd},\, \Phi_{bwd},\, \Phi_{\text{obj}})$ consisting of:
- a parameter mapping $\Phi_{p} : P \to P'$,
- a forward mapping $\Phi_{fwd}(\cdot;p) : R^{n(p)} \to R^{n'(\Phi_{p}(p))}$,
- a backward mapping $\Phi_{bwd}(\cdot;p) : R^{n'(\Phi_{p}(p))} \to R^{n(p)}$ computable in polynomial time, and
- an objective mapping $\Phi_{\text{obj}} : R \to R$.

$B$ is a *reformulation* of $A$ if there exists a reformulation construction satisfying the following conditions for every instance $p \in P$, with $p' = \Phi_{p}(p)$:
- **Forward feasibility.** For all feasible points $x \in F(p)$, the forward mapping maps to a feasible point $x' = \Phi_{fwd}(x;p) \in F'(\Phi_{p}(p))$.
- **Backward feasibility.** For all feasible points $x' \in F'(p')$, the backward mapping maps to a feasible point $x = \Phi_{bwd}(x';p) \in F(p)$.
- **Objective mapping.** The forward and backward mappings induce an objective mapping. The following two conditions must hold. (1) For all feasible points ${x \in F(p)}$, the forward mapped point $x' = \Phi_{fwd}(x;p)$ has objective value $f_0'(x';p') = \Phi_{\text{obj}}(f_0(x;p))$. (2) For all feasible points ${x' \in F'(p')}$, the backward mapped point is $x = \Phi_{bwd}(x';p)$ and $f_0'(x';p') = \Phi_{\text{obj}}(f_0(x;p))$.
- **Strictly monotone.** The objective mapping $\Phi_{\text{obj}}$ is strictly monotonically increasing.

## Parameter Mapping

The parameter mapping $\Phi_{p}$ is *given*. It computes each parameter of formulation B from the parameters of formulation A.

{{ parameter_map }}

## Instructions

- Decide whether B is a reformulation of A under *this* parameter mapping. Do not substitute a different one, even if another mapping would make B a reformulation of A.
- Do NOT make any assumptions about the formulation or parameter space that are not explicitly stated in the formulation descriptions.
- When uncertain, state that formulation B is *not* a reformulation of A.
- Provide a short summary of your conclusion (at most 2,500 characters) and a final determination of whether B is a reformulation of A (true or false).
\end{lstlisting}

\clearpage
\begin{lstlisting}[
    style=jinja,
    basicstyle=\scriptsize\ttfamily,
    caption={Formulation Prompt Template. Rendered as \texttt{formulation.md} for \texttt{FLARE} and \texttt{\{\{ formulation\_(a|b) \}\}} in the \texttt{FLARE-NL} prompt.},
    label={lst:formulation-template}
]
# {{ problem_name }}

## Problem Description

{{ problem_description }}

## Formulation

### Parameters

{% for name, p in parameters.items() %}
- **{{ name }}** (type: {{ p.type.value }}, shape: `{{ p.shape }}`): {{ p.description }}
{% endfor %}

{% if assumptions %}

### Assumptions

{% for a in assumptions %}
- {{ a.description }}
$${{ a.formulation }}$$
{% endfor %}
{% endif %}

{% if definitions %}

### Definitions

{% for name, d in definitions.items() %}
- **{{ name }}**: {{ d.description }}
$${{ d.formulation }}$$
{% endfor %}
{% endif %}

### Variables

{% for name, v in variables.items() %}
- **{{ name }}** (type: {{ v.type.value }}, shape: `{{ v.shape }}`): {{ v.description }}
{% endfor %}

### Constraints

{% for c in constraints %}
- {{ c.description }}
$${{ c.formulation }}$$
{% endfor %}

### Objective

{{ objective.description }}
$${{ objective.formulation }}$$
\end{lstlisting}

\begin{lstlisting}[
    style=jinja,
    basicstyle=\scriptsize\ttfamily,
    caption={Parameter Map Template. Rendered as \texttt{map.md} for \texttt{FLARE} and \texttt{\{\{ parameter\_map \}\}} in the \texttt{FLARE-NL} prompt.},
    label={lst:parameter-map-template}
]
# Parameter Map

{% for note in notes %}
{{ note }}

{% endfor %}
{% if definitions %}
## Definitions

{% for name, d in definitions.items() %}
- **{{ name }}**
$${{ d.formulation }}$$
{% endfor %}

{% endif %}
## Parameters

{% for name, d in parameters.items() %}
- **{{ name }}**
$${{ d.formulation }}$$
{% endfor %}
\end{lstlisting}

%% file: appendix/5_formulation_bench.tex
\section{FormulationBench Details}
\label{sec:formulation-bench-details}

The FormulationBench dataset is a collection of 20 optimization problems, 109 MILP formulations, and 89 reformulation pairs (Table \ref{tab:fomulation-bench}). The dataset can be downloaded via the \texttt{formulation-bench} Python package. The documentation\footnote{\url{https://formulation-bench.henryrobbins.com}} provides user guides, detailed problem and formulation descriptions, the dataset schema, and the package API reference. We provide a concise summary here. The dataset is comprised of three sources:

\begin{itemize}[topsep=2pt,parsep=0pt,leftmargin=2em]
    \item \textbf{EquivaFormulation \cite{zhai2025a}.} The five \textit{EquivaFormulation} problems were sampled at random. These are simple optimization problems with a few scalar variables and constraints. Each one contains an original formulation and 10 transformations. These are enumerated in Table 1 of \cite{zhai2025a}. We make the following modifications.
    \begin{itemize}[topsep=2pt,parsep=0pt,leftmargin=2em]
    \item The \emph{Add Valid Inequalities} transformation is instance-specific. Because we are interested in formulation-level reformulation, we omit this transformation type.
    \item We change the label on the \emph{Replace by Base-10 Representation}. This formulation replaces each integer variable with a base-10 decimal expansion. It relies on using enough digit variables to support the size of the test instance data. Hence, this transformation is not valid under our formulation-level notion of reformulation.
    \end{itemize}
    \item \textbf{EvoCut \cite{yazdani2025}.} They define 7 optimization problems for which their method EvoCut proposes numerous cutting planes. We construct reformulation pairs by pairing the original MILP formulation with one augmented by the cut. A cutting plane valid for all instances is a valid reformulation.
    \item \textbf{\citet{ferchtandiker2025}.} They define 8 real-world optimization problems, each admitting an efficient and inefficient formulation. We add each as a pair to \textit{FormulationBench}.
\end{itemize}

\input{tables/problems}

\paragraph*{Dataset Structure.}
Each problem contains (1) a Markdown problem description file, (2) a JSON information file, and (3) JSON files with instance data and the corresponding optimal solution. Each problem contains at least two formulations. Each formulation contains (1) a JSON information file, (2) a Python script to transform raw problem data into the formulation's parameter space, and (3) a ground-truth \texttt{MILPFormulation} Lean formalization. The JSON information file format is an extension of the format introduce by the NLP4LP dataset \cite{ahmaditeshnizi2024}. We add \texttt{assumptions} (see Section \ref{sec:formulation-bench} for a discussion) and \texttt{definitions} fields. Lastly, each formulation is flagged as valid if it is faithful to the underlying optimization problem.

\paragraph*{Reformulation Test Set.}
FormulationBench contains 89 pairs of formulations $(\mathcal{M}, \mathcal{M}')$ containing 63 positive examples where $\mathcal{M}'$ is a constructive reformulation of $\mathcal{M}$ and 26 negative examples. Our experiments use the 54 pairs (42 positive, 12 negative) belonging to the 16 NP-hard problems. Each reformulation pair additionally contains a JSON file specifying the parameter mapping $\Phi_{\mathrm{p}}$ from $\mathcal{P}$ to $\mathcal{P}'$. For every valid reformulation, we provide a ground-truth \texttt{MILPReformulation} proof.

%% file: tables/problems.tex
\begin{table}[th]
\centering
\captionsetup{font=footnotesize, labelfont=bf}
\caption{The 20 optimization problems in the FormulationBench dataset with their source. The \emph{NP-hard} column indicates if the problem is NP-hard. We only use the 16 NP-hard problems in our experiments.}
\label{tab:fomulation-bench}
\footnotesize
\begin{tabular}{llp{3.5cm}c}
\toprule
\textbf{Problem} & \textbf{Name} & \textbf{Source} & \textbf{NP-hard} \\
\midrule
\texttt{p1}  & EquivaFormulation Instance 47 & EquivaFormulation \cite{zhai2025a} & \ding{55} \\
\texttt{p2}  & EquivaFormulation Instance 74 & EquivaFormulation \cite{zhai2025a} & \ding{51} \\
\texttt{p3}  & EquivaFormulation Instance 92 & EquivaFormulation \cite{zhai2025a} & \ding{51} \\
\texttt{p4}  & EquivaFormulation Instance 183 & EquivaFormulation \cite{zhai2025a} & \ding{55} \\
\texttt{p5}  & EquivaFormulation Instance 217 & EquivaFormulation \cite{zhai2025a} & \ding{55} \\
\texttt{p6}  & Capacitated Warehouse Location (CWLP) & EvoCut \cite{yazdani2025} & \ding{51} \\
\texttt{p7}  & Rectangular Tiling (IMO6) & EvoCut \cite{yazdani2025} & \ding{55} \\
\texttt{p8}  & Job Shop Scheduling (JSSP) & EvoCut \cite{yazdani2025} & \ding{51} \\
\texttt{p9}  & Multi-Commodity Network Design (MCND) & EvoCut \cite{yazdani2025} & \ding{51} \\
\texttt{p10} & Pickup and Delivery with Time Windows (PDPTW) & EvoCut \cite{yazdani2025} & \ding{51} \\
\texttt{p11} & Sub-Hour Unit Commitment (SHUC) & EvoCut \cite{yazdani2025} & \ding{51} \\
\texttt{p12} & Traveling Salesman Problem (TSP) & EvoCut \cite{yazdani2025} & \ding{51} \\
\texttt{p13} & Air Traffic Flow Management & \citet{ferchtandiker2025} & \ding{51} \\
\texttt{p14} & Blood Bank Netherlands & \citet{ferchtandiker2025} & \ding{51} \\
\texttt{p15} & Dutch Housing Problem & \citet{ferchtandiker2025} & \ding{51} \\
\texttt{p16} & Park and Bike Hub Location (Mobian) & \citet{ferchtandiker2025} & \ding{51} \\
\texttt{p17} & Open-Pit Mine Production Scheduling & \citet{ferchtandiker2025} & \ding{51} \\
\texttt{p18} & Timor-Leste Hospital Location & \citet{ferchtandiker2025} & \ding{51} \\
\texttt{p19} & UN Humanitarian Disaster Response Hub (UNHDR) & \citet{ferchtandiker2025} & \ding{51} \\
\texttt{p20} & World Food Program Food Distribution & \citet{ferchtandiker2025} & \ding{51} \\
\bottomrule
\end{tabular}
\end{table}

%% file: appendix/6_invalid_reformulations.tex
\section{Invalid Reformulations}
\label{sec:invalid-reformulations}

While preparing the FormulationBench dataset, we verified prior LLM-generated MILP reformulations: cutting planes proposed by EvoCut~\cite{yazdani2025} and reformulation pairs proposed by \citet{ferchtandiker2025}. In the process, \texttt{FLARE} failed to produce reformulation certificates for 5 cutting planes and 4 formulations that were confirmed invalid upon manual inspection. In this section, we provide proofs of invalidity.

\subsection{EvoCut Cutting Planes}
\label{sec:evocut}

\citet{yazdani2025} recently proposed the EvoCut framework to automatically generate \emph{acceleration cuts} for MILP formulations using an LLM-based evolutionary search. Acceleration cuts are constraints added to a MILP formulation with the aim of reducing solve time. Importantly, such cuts are \emph{not} guaranteed to be valid cutting planes or even optimality preserving. This pragmatic choice to consider acceleration cuts enables automation by reducing the computational burden required to verify candidate cuts. A cut is considered an acceleration cut if it does not change the optimal objective value on a small verification set of problem instances.

In the FormulationBench dataset, we constructed a reformulation from a cutting plane by adding the cut to the base formulation. A formulation constructed from a valid cutting plane trivially satisfies Definition \ref{def:constructive}. Using \texttt{FLARE}, we identified that 5 of the 43 acceleration cuts proposed by EvoCut are invalid cutting planes. These cuts span the TSP and a rectangle tiling problem. In the case of TSP, all three invalid cutting plane families are only invalid on TSP instances of size $n\leq 3$. While these counter-examples are not practically meaningful, they illustrate the importance of explicitly stating all assumptions of the problem data. In the case of rectangle tiling, the cuts meaningfully change the set of optimal solutions (see Figure \ref{fig:imo6}). This motivates the importance of formally verifying reformulations, especially for non-standard optimization problems where an LLM is more likely to have an error in reasoning.

\subsubsection{Traveling Salesman Problem (TSP)}

For the TSP, EvoCut generates acceleration cuts for the Miller--Tucker--Zemlin (\ref{eq:mtz}) formulation defined in Section \ref{sec:formulations}. Between \texttt{v1} and \texttt{v2} of the arXiv preprint, there are 8 acceleration cuts proposed. \texttt{FLARE} identifies the following three as invalid cutting planes. Both cuts \ref{eq:tsp-v1-ec3} and \ref{eq:tsp-v2-ec2} eliminate triangles including the depot node and cut \ref{eq:tsp-v2-ec1} eliminates all two-city subtours.
\subimport{formulations/}{tsp_cuts}

\subimport{proofs/}{tsp_v1_ec3_invalid}

\subimport{proofs/}{tsp_v2_ec1_invalid}

\subimport{proofs/}{tsp_v2_ec2_invalid}

\subsubsection{Rectangular Tiling with One Hole per Row and Column (IMO6)}
\label{sec:invalid-imo6}

\subimport{formulations/}{imo6}

\begin{figure}[b]
\centering
\includegraphics[width=0.8\linewidth]{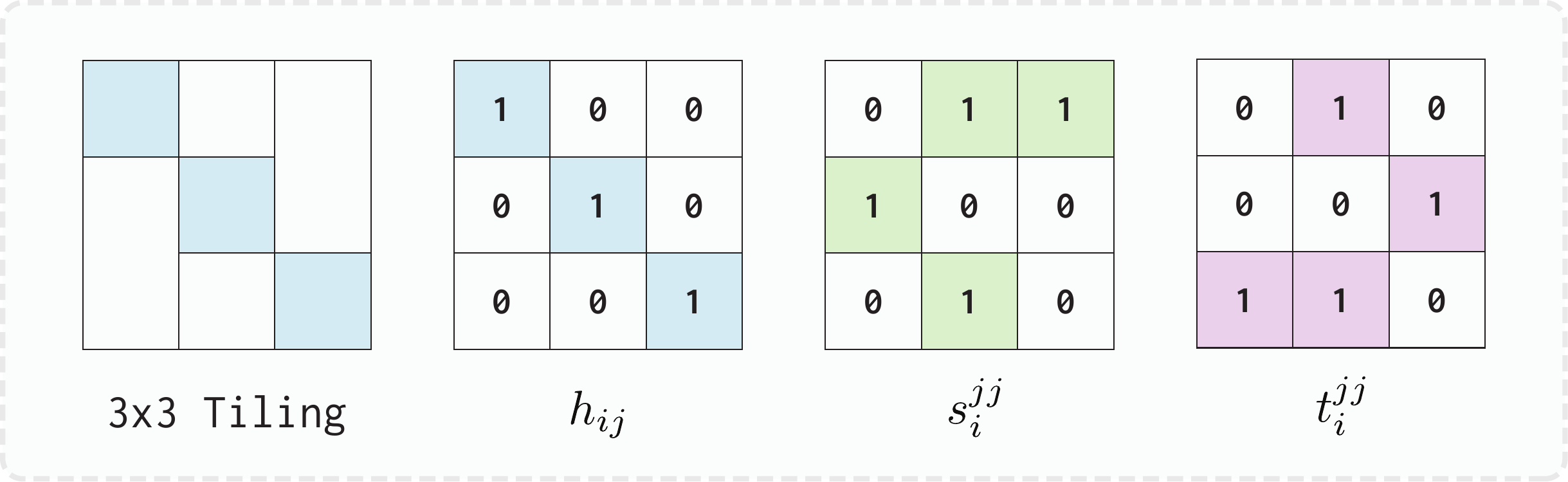}
\captionsetup{font=footnotesize, labelfont=bf}
\caption{A feasible rectangular tiling for the $N=3$ instance of IMO 2025 Problem 6. Decision variable values for the hole indicator $h_{ij}$, start flag $s^{jj}_i$ and end flag $t^{jj}_i$ are depicted in the grids. Note the figure does not specify $s^{ab}_i$ and $t^{ab}_i$ for $a < b$. However, since all tiles have unit width, these variables all have value 0.}
\label{fig:imo6}
\end{figure}

\subimport{proofs/}{imo6_v1_ec1_invalid}

\subimport{proofs/}{imo6_v1_ec2_invalid}

\subsection{Ferchtandiker Formulation Pairs}

\citet{ferchtandiker2025} introduces a dataset\footnote{The dataset is available at \url{https://github.com/nathan-ferchtandiker/LLMs-For-Optimization-Reformulations}} containing an efficient and inefficient MILP formulation for 8 optimization problems. The dataset includes both \LaTeX ~and GurobiPy code for each formulation. In the process of incorporating these reformulations in to the FormulationBench dataset, \texttt{FLARE} identified 4 invalid reformulations.

We prove the Air Traffic Management (ATM) and UN Humanitarian Disaster Response (UNHDR) reformulation pairs are not related by a constructive reformulation in either direction under the identity parameter mapping. We do not provide proofs for the World Food Program (WFP) since both formulations are polynomially-solvable linear programs, and our constructive reformulation definition is only meaningful on NP-hard formulations (Remark \ref{rem:poly-time}). The Open Pit Mining reformulation is intentionally a relaxation. It is expected to violate our definition and we omit any further discussion.

In FormulationBench, we modify the ATM, UNHDR, and WFP formulations to make them valid constructive reformulations, and we modify WFP to be NP-hard. See the documentation\footnote{\url{https://formulation-bench.henryrobbins.com/en/latest/problems}} for detailed descriptions of these modifications.

\subsubsection{Air Traffic Management (ATM)}

\subimport{formulations/}{atm}

The two formulations have misaligned objectives semantics. In the inefficient formulation, the reward $\sum_{p,a,t} r_{a,t} y_{p,a,t}$ accrues at every time period a plane is present at a location, so a plane that remains at $a$ for $k$ consecutive time periods contributes $\sum_{t} r_{a,t}$ over those $k$ periods. In the efficient formulation, the reward only accrues on arrivals, so the same solution contributes only the single reward $r_{a,t}$ at the arrival time.

First, we establish a useful obstruction that asserts the existence of a bijection between the objective value sets of constructive reformulations.

\subimport{lemmas/}{c_objective_level_bijection}

\subimport{proofs/}{c_objective_level_bijection}

\subimport{proofs/}{atm_not_constructive}

\subsubsection{UN Humanitarian Disaster Response Hub (UNHDR)}

\subimport{formulations/}{unhdr}

The two formulations differ in how the demand of a region may be met. In the \ref{eq:unhdr-inefficient} formulation, the assignment constraint $\sum_{h \in H} z_{hc} = 1$ together with the big-$M$ constraint $x_{hc} \leq M z_{hc}$ forces each region $c \in C$ to be served in full by a single hub. The \ref{eq:unhdr-efficient} formulation has no such restriction; a region can be supplied by a combination of multiple hubs simultaneously.

\subimport{proofs/}{unhdr_not_constructive}

\subsubsection{World Food Program (WFP)}

\subimport{formulations/}{wfp}

Both WFP formulations are polynomially-solvable linear programs, but our constructive reformulation definition is only meaningful on NP-hard formulations (Remark \ref{rem:poly-time}). Despite this, \texttt{FLARE} still identified two inconsistencies between the pair of WFP formulations.

\begin{itemize}[topsep=2pt,parsep=0pt,leftmargin=2em]
    \item \textbf{Misaligned objectives.} In the \ref{eq:wfp-efficient} formulation, the procurement cost term $\sum_k \mathrm{pc}_k \sum_{j \in N_B} \mathrm{dem}_j R_k$ depends only on the ration size $R_k$, while the demand constraint $\sum_i E_{ij} F_{ijk} \geq \mathrm{dem}_j R_k$ permits shipping in excess of $\mathrm{dem}_j R_k$ without additional procurement penalty. In the \ref{eq:wfp-inefficient} formulation, the procurement cost $\sum_k \mathrm{pc}_k \sum_p x_{pk}$ instead depends on the total amount shipped, so any slack in the demand constraint is charged.
    \item \textbf{Cycles.} In the \ref{eq:wfp-efficient} formulation, we have flow variables $F_{ijk}$. A feasible point could have positive flow on a cycle of transshipment points. However, in the \ref{eq:wfp-inefficient} formulation, $P$ is the set of all \emph{simple} paths from a supplier to a beneficiary camp. A feasible point necessarily has acyclic flow on each commodity.
\end{itemize}

Notably, these inconsistencies are only realized on sub-optimal feasible points. An optimal point will not ship excess demand nor send flow on a cycle. While the \ref{eq:wfp-efficient} formulation is an Audet reformulation of the \ref{eq:wfp-inefficient} formulation, our stricter constructive reformulation definition imposes conditions across the \emph{entire} feasible region.

We modify WFP to resolve these inconsistencies for inclusion in FormulationBench and make the problem NP-hard by restricting to integral flows and adding transhipment throughput capacities.

%% file: formulations/tsp_cuts.tex
\begin{align*}
& x_{j1} + x_{ji} + (u_j - u_i - 1) \le (n-1)(2 - x_{1i} - x_{ij}) && \forall i,j \in V \setminus \{1\},\; i \ne j \label{eq:tsp-v1-ec3}\tag{\texttt{v1}-EC3}\\
& x_{ij} + x_{ji} \le 1 && \forall i,j \in V,\; i < j \label{eq:tsp-v2-ec1}\tag{\texttt{v2}-EC1}\\
& x_{1i} + x_{i1} + x_{1j} + x_{j1} + x_{ij} + x_{ji} \le 2 && \forall i,j \in V \setminus \{1\},\; i < j \label{eq:tsp-v2-ec2}\tag{\texttt{v2}-EC2}\\
\end{align*}

%% file: proofs/tsp_v1_ec3_invalid.tex
\begin{proposition}
\label{prop:tsp-v1-ec3}
The cut \ref{eq:tsp-v1-ec3} is \emph{not} a valid cutting plane for the \ref{eq:mtz} TSP formulation.
\end{proposition}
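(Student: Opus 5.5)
The plan is to exhibit a single instance on which the cut removes \emph{every} integer-feasible point of \eqref{eq:mtz}. Since a valid cutting plane must retain all integer-feasible points for every instance, this suffices. It also shows directly that the cut-augmented formulation is not a constructive reformulation under the identity parameter map. I will use the 3-node instance of Figure~\ref{fig:tsp}, with $n=3$, $V=\{1,2,3\}$, and arbitrary weights. This is an admissible instance because $\mathcal{P}_{\text{TSP}}$ only requires $n\ge 2$.

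First I enumerate the integer-feasible points of \eqref{eq:mtz} for $n=3$. The assignment constraints together with $x_{ii}=0$ leave exactly two tours, $1\to2\to3\to1$ and $1\to3\to2\to1$.

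For the first tour, $x_{23}=1$, so the MTZ constraint gives $u_2-u_3+3\le 2$, i.e.\ $u_3\ge u_2+1$. Combined with $2\le u_i\le 3$, this forces $u_2=2$ and $u_3=3$. The remaining MTZ constraint, $u_3-u_2+0\le 2$, holds. Hence this tour is feasible with uniquely determined $u$.

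Symmetrically, the second tour forces $u_3=2$ and $u_2=3$. So $\mathcal{F}(p)$ consists of exactly these two points, and in particular it is nonempty.

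Next I evaluate \eqref{eq:tsp-v1-ec3} at each point, using the pair $(i,j)$ that follows the tour direction out of the depot.

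For tour $1\to2\to3\to1$, take $i=2$, $j=3$. The left-hand side is
\[
x_{31}+x_{32}+(u_3-u_2-1)=1+0+0=1.
\]
The right-hand side is
\[
(n-1)(2-x_{12}-x_{23})=2(2-1-1)=0.
\]
Since $1\le 0$ is false, the cut is violated.

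For tour $1\to3\to2\to1$, take $i=3$, $j=2$. The identical computation again gives $1\le 0$, so the cut is violated.

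Therefore adding the cut makes this instance infeasible. This is the conclusion: an integer-feasible point (indeed all of them) is removed, so the cut is not a valid cutting plane. Phrased via Definition~\ref{def:constructive}, forward feasibility fails, because $\mathcal{F}(p)\neq\emptyset$ while the augmented feasible region is empty, so no $\Phi_{\text{fwd}}$ can exist.

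The main step requiring care is the claim that $u$ is forced in each tour. If some other choice of $u$ satisfied the cut, the tour would survive. The MTZ inequality on the arc between the two non-depot nodes, combined with the bounds $2\le u_i\le n=3$, pins the values down. Note that showing only that the single point $(x,u)$ with $u=(1,2,3)$ is removed would already show the cut is not valid. The stronger emptiness statement matters only for the reformulation interpretation, and it is cheap to include.
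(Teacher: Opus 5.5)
Your proof is correct and follows the paper's argument. Both use the 3-node instance with the feasible tour $1\to2\to3\to1$, take $i=2$, $j=3$, and reduce the cut to $1\le 0$. Your extra steps are also correct but not needed for the proposition: showing that $u$ is forced by the MTZ constraints, and that the reverse tour is cut by $(i,j)=(3,2)$. Together they strengthen the conclusion to the cut-augmented instance being infeasible, whereas the paper needs only the single violated feasible point.
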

\begin{proof}
Consider the 3-node TSP instance depicted in Figure \ref{fig:tsp} with feasible tour $1 \to 2 \to 3 \to 1$. Letting $i = 2$ and $j = 3$, the cut \ref{eq:tsp-v1-ec3} reduces to $1 \le 0$, which does not hold. Hence, there exists a feasible integer point that is not satisfied by the cut. Therefore, the cut is invalid.
\end{proof}

%% file: proofs/tsp_v2_ec1_invalid.tex
\begin{proposition}
\label{prop:tsp-v2-ec1}
The cut \ref{eq:tsp-v2-ec1} is \emph{not} a valid cutting plane for the \ref{eq:mtz} TSP formulation.
\end{proposition}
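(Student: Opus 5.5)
The plan is to follow the proof of Proposition \ref{prop:tsp-v1-ec3}: exhibit a single instance and a single feasible integer point of \eqref{eq:mtz} that violates the cut \eqref{eq:tsp-v2-ec1}. The cut forbids using both arcs $(i,j)$ and $(j,i)$, i.e., it removes two-city subtours. This is harmless when $n \ge 3$, but for $n=2$ the only possible tour is itself a two-city cycle. The hypothesis $n \ge 2$ in $\mathcal{P}_{\text{TSP}}$ allows this instance, so I take $V=\{1,2\}$ with arbitrary weights.

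The steps are as follows.

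\emph{Step 1: construct the point.} Set $x_{12}=x_{21}=1$, $x_{11}=x_{22}=0$, $u_1=1$ and $u_2=2$.

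\emph{Step 2: check feasibility in \eqref{eq:mtz}.}
\begin{itemize}
\item The out-degree equations $\sum_{j\ne i} x_{ij}=1$ hold for $i=1,2$, and likewise the in-degree equations hold.
\item The MTZ constraints range over $i,j \in V\setminus\{1\}=\{2\}$ with $i\neq j$. There are no such pairs, so these constraints are vacuous.
\item $u_1=1$ holds.
\item $2 \le u_2 \le n=2$ holds.
\item All $x$ values are binary.
\end{itemize}
This is also the point that would be checked against the Lean encoding in Listing \ref{lst:tsp-lean-formalization}, including \texttt{hx\_no\_self} and the bound $u_i \le n$ imposed for all $i$.

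\emph{Step 3: show the cut is violated.} Take $i=1$, $j=2$, which satisfies $i<j$. Then $x_{12}+x_{21}=2>1$, so the cut fails.

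\emph{Step 4: conclude.} The cut removes a feasible integer point. In fact it removes the unique feasible tour, so the augmented formulation is infeasible on this instance. Hence the cut is not a valid cutting plane. Equivalently, backward feasibility in Definition \ref{def:constructive} cannot be satisfied, since the original formulation has a feasible point while the cut formulation has none, so no forward map can exist.

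I expect no step to be a real obstacle. The one point needing care is confirming that $n=2$ is a legitimate instance, which follows from the assumption $n \ge 2$, and that the MTZ constraints really are vacuous for this instance rather than accidentally excluding the point.
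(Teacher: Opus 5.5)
Your proof is correct and matches the paper's: on the 2-node instance, the point $x_{12}=x_{21}=1$ is MTZ-feasible, since the MTZ constraints are vacuous there, and the cut at $i=1$, $j=2$ reduces to $2 \le 1$, which fails. One small slip is in your closing remark: with the cut formulation as $\mathcal{M}'$, the condition that fails on this instance is \emph{forward} feasibility, because backward feasibility holds vacuously when $\mathcal{F}'(p')$ is empty; your phrase ``no forward map can exist'' in fact describes forward feasibility.
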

\begin{proof}
Consider the 2-node TSP instance with the unique feasible tour $1 \to 2 \to 1$. The MTZ formulation permits \(x_{12}=x_{21}=1\), corresponding to traversing both arcs of this tour. Setting \(i=1\) and \(j=2\), the cut \ref{eq:tsp-v2-ec1} reduces to \(2 \le 1\), which does not hold. Hence, there exists a feasible integer point that is not satisfied by the cut. Therefore, the cut is invalid.
\end{proof}

%% file: proofs/tsp_v2_ec2_invalid.tex
\begin{proposition}
\label{prop:tsp-v2-ec2}
The cut \ref{eq:tsp-v2-ec2} is \emph{not} a valid cutting plane for the \ref{eq:mtz} TSP formulation.
\end{proposition}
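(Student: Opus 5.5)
We need to exhibit a feasible integer point of \eqref{eq:mtz} that violates \eqref{eq:tsp-v2-ec2}; that is, a point in the feasible region $\mathcal{F}(p)$ that the cut removes. Following the pattern of Propositions~\ref{prop:tsp-v1-ec3} and~\ref{prop:tsp-v2-ec1}, I will use the smallest instance on which the cut is not vacuous. The cut ranges over pairs $i<j$ in $V\setminus\{1\}$, so the instance needs $n\ge 3$. I take the 3-node instance of Figure~\ref{fig:tsp} with tour $1\to 2\to 3\to 1$.

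\textbf{Step 1: the feasible point.} Set $x_{12}=x_{23}=x_{31}=1$ and all other $x_{ij}=0$, with $u_1=1$, $u_2=2$, $u_3=3$.

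\textbf{Step 2: check feasibility in \eqref{eq:mtz}.}
\begin{itemize}
\item Each node has exactly one outgoing arc and one incoming arc, so the assignment constraints hold.
\item The bounds $2\le u_i\le 3=n$ hold for $i\in\{2,3\}$.
\item The MTZ constraints concern only $i,j\in\{2,3\}$. For $(2,3)$: $u_2-u_3+3x_{23}=-1+3=2\le 2$. For $(3,2)$: $u_3-u_2+3x_{32}=1\le 2$.
\item The variables are binary and there are no self-loops.
\end{itemize}

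\textbf{Step 3: show the cut is violated.} With $i=2$ and $j=3$, the left-hand side of \eqref{eq:tsp-v2-ec2} is
$$x_{12}+x_{21}+x_{13}+x_{31}+x_{23}+x_{32}=1+0+0+1+1+0=3>2.$$
So this feasible integer point violates the cut, and the cut is not valid.

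No step is hard. The only points needing care are that the MTZ inequality for arc $(2,3)$ holds with equality, and that the $u$-values respect the bounds. One could also remark that for $n\ge4$ the cut is valid, since a Hamiltonian tour cannot contain three of the edges within $\{1,i,j\}$, but this is not needed for the proposition.
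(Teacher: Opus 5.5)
Your proof is correct and matches the paper's argument. The paper uses the same 3-node instance with tour $1 \to 2 \to 3 \to 1$, takes $i=2$, $j=3$, and observes that the cut reduces to $3 \le 2$; your explicit verification of the assignment, MTZ, and bound constraints fills in the feasibility check that the paper delegates to Figure~\ref{fig:tsp}.
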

\begin{proof}
Consider the 3-node TSP instance depicted in Figure \ref{fig:tsp} with feasible tour $1 \to 2 \to 3 \to 1$. Setting \(i=2\) and \(j=3\), the cut \ref{eq:tsp-v2-ec2} reduces to \(3 \le 2\), which does not hold. Hence, there exists a feasible integer point that is not satisfied by the cut. Therefore, the cut is invalid.
\end{proof}

%% file: formulations/imo6.tex
This problem is inspired by IMO 2025 Problem 6. Given an $N \times N$ grid of unit squares, one must place rectangular tiles (of various sizes) such that each row and column of the grid contain exactly one uncovered square (a \emph{hole}). The objective is to minimize the number of tiles used. \citet{yazdani2025} introduce the following notation and formulation.

\paragraph*{Notation.}
\begin{itemize}[topsep=2pt,parsep=0pt,leftmargin=2em]
    \item $R = \{1,\dots,N\}$ rows and $C = \{1,\dots,N\}$ columns
    \item $I = \{(a,b) \in C^2 ~|~ a \leq b\}$ contiguous column-intervals
    \item $h_{ij} \in \{0,1\}$ hole indicator
    \item $x^{ab}_i \in \{0,1\}$ indicates if columns $a$ through $b$ on row $i$ are covered by the same tile
    \item $s^{ab}_i,t^{ab}_i\in \{0,1\}$ indicate if a tile spanning columns $a$ to $b$ begins on row $i$ or ends on row $i$, respectively
\end{itemize}

\paragraph*{Formulation.}
\footnotesize
\begin{equation*}
\begin{aligned}
\min\;&\sum_{i\in R}\sum_{(a,b)\in I} s_i^{ab}\\[3pt]
\text{s.t.}\;
&\sum_{j\in C} h_{ij}=1 &&\forall i\in R\\
&\sum_{i\in R} h_{ij}=1 &&\forall j\in C\\
&\sum_{\substack{(a,b)\in I\\ a\le j\le b}} x_i^{ab}+h_{ij}=1
&&\forall i\in R,\;\forall j\in C\\
&x_1^{ab}-s_1^{ab}=0 &&\forall (a,b)\in I\\
&x_i^{ab}-x_{i-1}^{ab}-s_i^{ab}+t_{i-1}^{ab}=0
&&\forall i=2,\dots,N,\;\forall(a,b)\in I\\
&x_N^{ab}-t_N^{ab}=0 &&\forall (a,b)\in I\\
&h_{ij}\in\{0,1\}
&&\forall i\in R,\;\forall j\in C\\
&x_i^{ab},s_i^{ab},t_i^{ab}\in\{0,1\}
&&\forall i\in R,\;\forall(a,b)\in I
\end{aligned}
\label{eq:imo6}
\tag{IMO6}
\end{equation*}
\normalsize

Between \texttt{v1} and \texttt{v2} of the arXiv preprint, there are 8 acceleration cuts proposed. \texttt{FLARE} identifies the following two as invalid cutting planes. The cut \ref{eq:imo6-v1-ec1} enforces that a tile end (bottom-right corner) must be immediately to the left of any hole, and the cut \ref{eq:imo6-v1-ec2} enforces that a tile start (top-left corner) must be immediately to the right of any hole.

\footnotesize
\begin{align*}
& h_{ij} \le \sum_{\substack{(a,b)\in I\\ b=j-1}} t_i^{ab} && \forall i\in R,\; \forall j\in\{2,\dots,N\} \label{eq:imo6-v1-ec1}\tag{\texttt{v1}-EC1}\\
& h_{ij} \le \sum_{\substack{(a,b)\in I\\ a=j+1}} s_i^{ab} && \forall i\in R,\; \forall j\in\{1,\dots,N-1\} \label{eq:imo6-v1-ec2}\tag{\texttt{v1}-EC2}\\
\end{align*}
\normalsize

%% file: proofs/imo6_v1_ec1_invalid.tex
\begin{proposition}
\label{prop:imo6-v1-ec1}
The cut \ref{eq:imo6-v1-ec1} is \emph{not} a valid cutting plane for the \ref{eq:imo6} formulation.
\end{proposition}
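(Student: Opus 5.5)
The plan is to refute validity with a single explicit counterexample. I will exhibit one small instance of \eqref{eq:imo6} and one integer-feasible point $(h,x,s,t)$ of that instance that violates \eqref{eq:imo6-v1-ec1}. A valid cutting plane must be satisfied by every integer-feasible point, so this suffices. The cut asserts that a hole at $(i,j)$ with $j\ge 2$ must have a tile ending on row $i$ whose right edge is column $j-1$. This fails whenever the tile covering cell $(i,j-1)$ continues downward past row $i$. I would use $N=3$ and unit-width vertical tiles, as in Figure~\ref{fig:imo6}. Unit width makes every variable $x^{ab}_i,s^{ab}_i,t^{ab}_i$ with $a<b$ equal to zero, which simplifies the checks.

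\textbf{Construction.} Take holes at $(1,2)$, $(2,3)$, $(3,1)$. Use four tiles:
\begin{itemize}[topsep=2pt,parsep=0pt,leftmargin=2em]
\item column $1$, rows $1$--$2$: $x^{11}_1=x^{11}_2=1$, $s^{11}_1=1$, $t^{11}_2=1$;
\item column $2$, rows $2$--$3$: $x^{22}_2=x^{22}_3=1$, $s^{22}_2=1$, $t^{22}_3=1$;
\item column $3$, row $1$ only: $x^{33}_1=s^{33}_1=t^{33}_1=1$;
\item column $3$, row $3$ only: $x^{33}_3=s^{33}_3=t^{33}_3=1$.
\end{itemize}
All remaining variables are $0$.

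\textbf{Feasibility checks.} These are routine, carried out in order:
\begin{itemize}[topsep=2pt,parsep=0pt,leftmargin=2em]
\item Row and column hole sums: the holes form a permutation, so each row and each column contains exactly one hole.
\item Coverage $\sum_{a\le j\le b}x^{ab}_i+h_{ij}=1$: check cell by cell on the $3\times 3$ grid.
\item Start constraint $x^{ab}_1=s^{ab}_1$: holds for $(1,1)$ and $(3,3)$ on row $1$.
\item Flow constraint $x^{ab}_i-x^{ab}_{i-1}-s^{ab}_i+t^{ab}_{i-1}=0$: check each interval. For column $3$, row $2$ gives $0-1-0+1=0$ and row $3$ gives $1-0-1+0=0$. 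Column $2$, row $2$ gives $1-0-1+0=0$. Column $1$, row $3$ gives $0-1-0+1=0$.
\item End constraint $x^{ab}_3=t^{ab}_3$: holds for $(2,2)$ and $(3,3)$.
\end{itemize}

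\textbf{Violation.} At $i=1$, $j=2$ we have $h_{12}=1$. The only interval with $b=j-1=1$ is $(1,1)$, and $t^{11}_1=0$ because that tile ends on row $2$. The cut therefore reads $1\le 0$, which is false. The only step requiring care is the flow-constraint bookkeeping, so I would verify the flow constraint for every interval and every row explicitly rather than only the nonzero ones.
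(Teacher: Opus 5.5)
Your counterexample is correct and takes the same approach as the paper, which also exhibits an $N=3$ unit-width tiling (shown in its Figure) and evaluates the cut at $i=j=2$, where it reduces to $h_{22}\le t^{11}_2$, i.e.\ $1\le 0$. Your version differs only in the hole pattern (holes at $(1,2),(2,3),(3,1)$, violated at $i=1$, $j=2$), and it has the added merit of writing out every feasibility check instead of relying on a figure.
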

\begin{proof}
Consider the $N=3$ counterexample depicted in Figure \ref{fig:imo6}. Let $i=j=2$. The inequality reduces to $h_{22} \leq t^{11}_2$. Substituting yields $1 \leq 0$, which does not hold. Hence, there exists a feasible integer point that is not satisfied by the cut. Therefore, the cut is invalid.
\end{proof}

%% file: proofs/imo6_v1_ec2_invalid.tex
\begin{proposition}
\label{prop:imo6-v1-ec2}
The cut \ref{eq:imo6-v1-ec2} is \emph{not} a valid cutting plane for the \ref{eq:imo6} formulation.
\end{proposition}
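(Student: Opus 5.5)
The plan is to reuse the $N=3$ feasible tiling of Figure~\ref{fig:imo6}, exactly as in the proof of Proposition~\ref{prop:imo6-v1-ec1}, and exhibit a pair $(i,j)$ with $j \le N-1$ at which \ref{eq:imo6-v1-ec2} is violated. No new construction is needed: validity of a cut requires it to hold at every integer-feasible point, so one feasible point violating it suffices.

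The first step is to record why the point in Figure~\ref{fig:imo6} is feasible for \ref{eq:imo6}.
\begin{itemize}[topsep=2pt,parsep=0pt,leftmargin=2em]
\item Its holes form a permutation pattern, so there is exactly one hole per row and per column.
\item Every non-hole cell is covered by exactly one unit-width tile, so $\sum_{a\le j\le b} x_i^{ab}+h_{ij}=1$.
\item The start and end flags $s^{jj}_i, t^{jj}_i$ are set so that the flow-balance constraints $x_i^{ab}-x_{i-1}^{ab}-s_i^{ab}+t_{i-1}^{ab}=0$, together with the boundary rows, hold.
\item All variables with $a<b$ are zero, as noted in the caption.
\end{itemize}
This verification is routine and was already implicitly used for Proposition~\ref{prop:imo6-v1-ec1}.

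Next, I would pick a row $i$ whose hole lies at a column $j<N$ such that the cell $(i,j+1)$ is covered by a vertical unit-width tile that began in an earlier row $i'<i$. In that case $s^{j+1,j+1}_i=0$. Every other interval $(a,b)$ with $a=j+1$ has $b>j+1$, so its flag $s^{ab}_i$ is $0$ because no tile is wider than one column. The cut therefore reduces to $h_{ij}\le s^{j+1,j+1}_i$. Substituting the figure's values gives $1\le 0$, which is false.

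The only real work is reading off the figure which cell plays this role. By analogy with the previous proposition, I expect the middle row $i=2$ to work: its hole sits at a column $j\in\{1,2\}$, and the cell immediately to its right is continued from a tile starting in row $1$. If no row in the figure has this property, I would fall back to a hand-built feasible $N=3$ tiling. I would place holes at $(1,3),(2,1),(3,2)$ and cover column $2$ with one vertical tile spanning rows $1$–$2$. Then at $i=2,j=1$ the cut reads $h_{21}\le s^{22}_2=0$, i.e.\ $1\le 0$, which gives the contradiction directly. I would conclude, as in the preceding propositions, that a feasible integer point violates the cut, so the cut is invalid.
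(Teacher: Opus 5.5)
Your proposal is correct and follows the paper's proof. The paper uses the $N=3$ tiling of Figure~\ref{fig:imo6} with $i=j=2$: the hole at $(2,2)$ sits immediately left of the cell $(2,3)$, which is covered by a vertical tile begun in row~$1$, so the cut reduces to $h_{22}\le s^{33}_2$, i.e.\ $1\le 0$. Your fallback instance (holes at $(1,3),(2,1),(3,2)$, with a single column-$2$ tile on rows $1$--$2$) is also a feasible point that violates the cut at $(i,j)=(2,1)$, so either route suffices.
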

\begin{proof}
Consider the $N=3$ counterexample depicted in Figure \ref{fig:imo6}. Let $i=2$ and $j=2$. The inequality reduces to $h_{22} \leq s^{33}_2$. Substituting yields $1 \leq 0$, which does not hold. Hence, there exists a feasible integer point that is not satisfied by the cut. Therefore, the cut is invalid.
\end{proof}

%% file: formulations/atm.tex
This problem consists of assigning each plane in an airline's fleet to a location (airport or airspace sector) over a sequence of time periods. The assignment must respect each location's capacity over time and the network's adjacency structure. Planes can only travel to locations that are a single time unit away. The objective is to maximize the total reward accrued for visiting locations. \citet{ferchtandiker2025} introduces the following formulations of this problem.

\paragraph*{Notation.}
\begin{itemize}[topsep=2pt,parsep=0pt,leftmargin=2em]
    \item $P$: set of vehicles (planes).
    \item $A$: set of locations (airports and airspace sectors).
    \item $T$: set of time periods.
    \item $\mathrm{cap}_{a,t}$: capacity of location $a \in A$ at time $t \in T$.
    \item $\tau_{a,a'}$: travel time from location $a$ to location $a'$.
    \item $r_{a,t}$: reward for being at location $a \in A$ at time $t \in T$.
\end{itemize}

\paragraph*{Formulations.}
Both formulations share the binary departure variable $x_{p,a,a',t} \in \{0,1\}$, which indicates whether plane $p$ departs from $a$ to $a'$ at time $t$.\footnote{In the dataset, this variable is denoted $z_{p,a,a',t}$ in the inefficient formulation. We use $x_{p,a,a',t}$ for both formulations.} The \ref{eq:atm-efficient} formulation is purely event-based. The \ref{eq:atm-inefficient} formulation additionally tracks plane locations via $y_{p,a,t} \in \{0,1\}$, which indicates whether plane $p$ is at location $a$ at time $t$.

\noindent\tiny%
\begin{minipage}[t]{0.48\textwidth}
\vspace{-8pt}
\begin{equation}
\begin{aligned}
\max \quad & \sum_{p \in P} \sum_{a \in A} \sum_{t \in T} r_{a,t} y_{p,a,t} \\
\text{s.t.} \quad & \sum_{a \in A} y_{p,a,t} = 1 && \forall p \in P,\; t \in T \\
& \sum_{p \in P} y_{p,a,t} \leq \mathrm{cap}_{a,t} && \forall a \in A,\; t \in T \\
& y_{p,a,t} = y_{p,a,t-1} + \\ & \quad \sum_{a' \in A} \sum_{t' : t' + \tau_{a',a} = t} x_{p,a',a,t'} \\ & \quad - \sum_{a' \in A} x_{p,a,a',t} && \forall p \in P,\; a \in A,\; t > 0 \\
& y_{p,a,t} \in \{0,1\} && \forall p \in P,\; a \in A,\; t \in T \\
& x_{p,a,a',t} \in \{0,1\} && \forall p \in P,\; a,a' \in A,\; t \in T \\
\end{aligned}
\tag{Inefficient}\label{eq:atm-inefficient}
\end{equation}
\end{minipage}%
\hfill%
\begin{minipage}[t]{0.48\textwidth}
\vspace{-8pt}
\begin{equation}
\begin{aligned}
\max \quad & \sum_{p \in P} \sum_{a \in A} \sum_{t \in T} r_{a,t} \\ & \quad \left( \sum_{a' \in A} \sum_{t' : t' + \tau_{a',a} = t} x_{p,a',a,t'} \right) \\
\text{s.t.} \quad & \sum_{a \in A} \sum_{a' \in A} \sum_{t \in T} x_{p,a,a',t} \geq 1 && \forall p \in P \\
& \sum_{p \in P} \sum_{a' \in A} x_{p,a,a',t} \leq \mathrm{cap}_{a,t} && \forall a \in A,\; t \in T \\
& x_{p,a,a',t} \in \{0,1\} && \forall p \in P,\; a,a' \in A,\; t \in T \\
\end{aligned}
\tag{Efficient}\label{eq:atm-efficient}
\end{equation}
\end{minipage}\normalsize

%% file: lemmas/c_objective_level_bijection.tex
\begin{restatable}[Objective-level bijection]{lemma}{cobjlevelbijection}
\label{lem:c-objective-level-bijection}
Let $\Phi = (\Phi_{\mathrm{p}}, \Phi_{\mathrm{fwd}}, \Phi_{\mathrm{bwd}}, \Phi_{\mathrm{obj}})$ be a reformulation construction (Definition \ref{def:constructive}) from $\mathcal{M}$ to $\mathcal{M}'$, and let $p \in \mathcal{P}$ with $p' = \Phi_{\mathrm{p}}(p)$. Write $V(p) = \{f_0(x;p) : x \in \mathcal{F}(p)\}$ and $V'(p') = \{f_0'(x';p') : x' \in \mathcal{F}'(p')\}$ for the attained objective-value sets. Then $\Phi_{\mathrm{obj}}$ is a bijection from $V(p)$ onto $V'(p')$; in particular $|V(p)| = |V'(p')|$.
\end{restatable}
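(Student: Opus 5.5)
The plan is to check three things directly from Definition~\ref{def:constructive}: $\Phi_{\mathrm{obj}}$ sends $V(p)$ into $V'(p')$, the image is all of $V'(p')$, and $\Phi_{\mathrm{obj}}$ is injective. Fix $p \in \mathcal{P}$ and write $p' = \Phi_{\mathrm{p}}(p)$.

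\emph{Well-definedness into $V'(p')$.} Take any $v \in V(p)$ and choose $x \in \mathcal{F}(p)$ with $f_0(x;p) = v$.
\begin{itemize}
\item By forward feasibility, $\Phi_{\mathrm{fwd}}(x;p) \in \mathcal{F}'(p')$.
\item By objective preservation~(1), $f_0'(\Phi_{\mathrm{fwd}}(x;p);p') = \Phi_{\mathrm{obj}}(v)$.
\end{itemize}
Hence $\Phi_{\mathrm{obj}}(v) \in V'(p')$, so the restriction $\Phi_{\mathrm{obj}}|_{V(p)} : V(p) \to V'(p')$ is well defined. The value $\Phi_{\mathrm{obj}}(v)$ depends only on $v$, not on the choice of $x$, because $\Phi_{\mathrm{obj}}$ is a function of the real number alone.

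\emph{Surjectivity.} Take any $v' \in V'(p')$ and choose $x' \in \mathcal{F}'(p')$ with $f_0'(x';p') = v'$.
\begin{itemize}
\item By backward feasibility, $x = \Phi_{\mathrm{bwd}}(x';p) \in \mathcal{F}(p)$, so $f_0(x;p) \in V(p)$.
\item By objective preservation~(2), $v' = \Phi_{\mathrm{obj}}(f_0(x;p))$.
\end{itemize}
So every $v'$ is the image of an element of $V(p)$.

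\emph{Injectivity.} A strictly increasing function $\mathbb{R} \to \mathbb{R}$ is injective: if $a \neq b$, say $a < b$, then $\Phi_{\mathrm{obj}}(a) < \Phi_{\mathrm{obj}}(b)$. Its restriction to $V(p)$ is therefore injective. Combining the three steps, $\Phi_{\mathrm{obj}}|_{V(p)}$ is a bijection onto $V'(p')$, and $|V(p)| = |V'(p')|$ follows as equality of cardinalities. This holds even when the sets are infinite or empty.

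There is no substantive obstacle; the argument is a direct unfolding of the definition. The one point needing care is that surjectivity genuinely requires condition~(2) of objective preservation together with backward feasibility, while condition~(1) alone only gives an injection. I would state this explicitly, since the downstream ATM argument uses the cardinality equality to derive a contradiction.
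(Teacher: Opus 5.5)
Your proof is correct and matches the paper's argument: forward feasibility with objective preservation~(1) gives $\Phi_{\mathrm{obj}}(V(p)) \subseteq V'(p')$, backward feasibility with objective preservation~(2) gives the reverse inclusion, and strict monotonicity gives injectivity. One small wording point: the injectivity you attribute to condition~(1) actually comes from strict monotonicity, since condition~(1) only ensures the map lands in $V'(p')$.
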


%% file: proofs/c_objective_level_bijection.tex
\begin{proof}
We first show $\Phi_{\mathrm{obj}}(V(p)) \subseteq V'(p')$. For $v = f_0(x;p) \in V(p)$ with $x \in \mathcal{F}(p)$, the point $\Phi_{\mathrm{fwd}}(x;p)$ lies in $\mathcal{F}'(p')$ by forward feasibility and has objective value $\Phi_{\mathrm{obj}}(v)$ by objective preservation. Hence $\Phi_{\mathrm{obj}}(v) \in V'(p')$. We now show the reverse inclusion $V'(p') \subseteq \Phi_{\mathrm{obj}}(V(p))$. For $v' = f_0'(x';p') \in V'(p')$ with $x' \in \mathcal{F}'(p')$, the point $x = \Phi_{\mathrm{bwd}}(x';p)$ lies in $\mathcal{F}(p)$ by backward feasibility and satisfies $v' = \Phi_{\mathrm{obj}}(f_0(x;p))$ by objective preservation. Hence $v' \in \Phi_{\mathrm{obj}}(V(p))$. Both inclusions yield $\Phi_{\mathrm{obj}}(V(p)) = V'(p')$. Since $\Phi_{\mathrm{obj}}$ is strictly monotonically increasing, it is injective, so this surjection is a bijection and $|V(p)| = |V'(p')|$.
\end{proof}

%% file: proofs/atm_not_constructive.tex
\begin{proposition}
\label{prop:atm-not-constructive}
The \ref{eq:atm-inefficient} and \ref{eq:atm-efficient} formulations of ATM are not related by a constructive reformulation (Definition~\ref{def:constructive}) in either direction under the identity parameter mapping $\Phi_\textrm{p}$.
\end{proposition}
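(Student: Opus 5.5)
The plan is to use Lemma~\ref{lem:c-objective-level-bijection} as an obstruction. Under the identity parameter map, a constructive reformulation in either direction forces $|V(p)| = |V'(p)|$ for every instance $p$. So it suffices to exhibit one concrete ATM instance on which the two formulations attain different numbers of distinct objective values. Because the lemma is symmetric in cardinality, a single instance rules out both directions at once: if $\mathcal{M}'$ were a constructive reformulation of $\mathcal{M}$, or vice versa, the sets would have equal size. (Maximization is converted to minimization by negating the objective. This preserves the cardinality of the value sets, so the lemma applies unchanged.)

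I would construct a minimal instance: one plane $P=\{p\}$, two locations $A=\{a,b\}$, and two periods $T=\{0,1\}$. All travel times are $\tau=1$ (including $\tau_{a,a}$ and $\tau_{b,b}$), and capacities are large, say $\mathrm{cap}\equiv 1$, which is enough for one plane. The rewards $r_{a,t}$ are chosen generically (e.g.\ distinct powers of two) so that distinct "reward patterns" give distinct objective values. The key structural point is the following. In the \ref{eq:atm-inefficient} formulation the objective sums $r_{a,t}$ over every period the plane occupies, and the flow-balance equations tie $y$ to $x$. In the \ref{eq:atm-efficient} formulation the objective sums only arrival rewards, and the only constraints are "at least one departure" and the per-location departure capacity. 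That makes its feasible set combinatorially much richer: it allows many departures, simultaneous departures from both locations, and so on. I would enumerate the attained values on both sides. On the efficient side, the value set contains sums of subsets of arrival rewards $r_{\cdot,1}$ (and arrivals past the horizon contribute nothing), and these include $0$. On the inefficient side, every feasible $y$ places the plane somewhere at each period, so every value includes a positive $r_{\cdot,0}$ term. The two sets therefore differ in a way that can be exhibited concretely, and I would then compare their cardinalities.

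The main obstacle is the bookkeeping in the inefficient formulation. I have to determine exactly which $(x,y)$ are feasible, paying attention to how boundary conditions at $t=0$ and departures whose arrival falls outside $T$ interact with the equality constraints. This is what pins down $V(p)$ exactly. If the cardinalities happen to coincide for the first instance, I would adjust the parameters, for instance by setting capacity $\mathrm{cap}_{a,t}=0$ at selected $(a,t)$ to kill options on one side only, or by adding a third period. Setting all rewards equal to $1$ can also make the counting transparent: the inefficient objective is then always $|T|$, a single value, while the efficient objective counts arrivals and attains at least two values (e.g.\ one arrival versus two arrivals with two departures). That contradicts $|V(p)|=|V'(p)|$, and I expect this all-ones instance to be the cleanest final choice.
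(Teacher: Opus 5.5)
Your proposal is correct and takes essentially the paper's route. You apply Lemma~\ref{lem:c-objective-level-bijection} to a single-plane instance with $T=\{0,1\}$, unit travel times, unit capacities and all rewards equal to $1$, where $\sum_a y_{p,a,t}=1$ forces the inefficient objective to equal $|T|$ (so $|V_{\mathrm{ineff}}(p)|\le 1$), while the efficient objective counts in-horizon arrivals and attains at least two values. The only difference is that the paper uses a single location $A=\{a\}$, giving $V_{\mathrm{eff}}(p)=\{0,1\}$ versus $V_{\mathrm{ineff}}(p)=\{2\}$, whereas your two-location instance gives $\{0,1,2\}$ versus $\{2\}$; both yield the contradiction, so the generic-rewards detour is unnecessary.
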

\begin{proof}

Consider a single-plane, single-location instance $p$. Let $P = \{1\}$, $A = \{a\}$, and $T = \{0,1\}$ with $\tau_{a,a} = 1$, $\mathrm{cap}_{a,0} = \mathrm{cap}_{a,1} = 1$, and $r_{a,0} = r_{a,1} = 1$. Write $x_t := x_{1,a,a,t}$ and $y_t := y_{1,a,t}$. Let $V_\textrm{eff}(p)$ and $V_\textrm{ineff}(p)$ denote the objective values attained on $p$ by the \ref{eq:atm-efficient} and \ref{eq:atm-inefficient} formulations respectively.

In the \ref{eq:atm-efficient} formulation, the constraints reduce to $x_0 + x_1 \geq 1$ and $x_t \leq \mathrm{cap}_{a,t} = 1$, so the feasible set is $\{(1,0),(0,1),(1,1)\}$. A departure at time $t$ arrives at time $t + \tau_{a,a} = t+1$, so a departure at $t = 1$ arrives at $t = 2 \notin T$ and contributes nothing; the objective equals $r_{a,1} x_0 = x_0$. Hence $V_\textrm{eff}(p) = \{0,1\}$.

In the \ref{eq:atm-inefficient} formulation, $\sum_{a \in A} y_{1,a,t} = 1$ with $|A| = 1$ forces $y_0 = y_1 = 1$ at every feasible point. The point $y_0 = y_1 = 1$, $x_0 = x_1 = 0$ satisfies the transition constraint, so the feasible set is nonempty. Its objective is $y_0 r_{a,0} + y_1 r_{a,1} = 2$, giving $V_\textrm{ineff}(p) = \{2\}$.

Suppose that a constructive reformulation between the two formulations with $\Phi_{\mathrm{p}}$ the identity exists. By Lemma \ref{lem:c-objective-level-bijection}, $|V_\textrm{eff}(p)| = |V_\textrm{ineff}(p)|$ regardless of which formulation is the source. But $|V_\textrm{eff}(p)| = 2 \neq 1 = |V_\textrm{ineff}(p)|$, a contradiction.
\end{proof}

%% file: formulations/unhdr.tex
This problem consists of selecting a fixed number of response hubs to service a set of disaster regions. The selection must satisfy demand and response time constraints. The objective is to minimize transportation cost. \citet{ferchtandiker2025} introduces the following formulations of this problem.

\paragraph*{Notation.}
\begin{itemize}[topsep=2pt,parsep=0pt,leftmargin=2em]
    \item $H$: set of candidate hubs.
    \item $H^f \subseteq H$: set of hubs that must be open (fixed hubs).
    \item $C$: set of disaster regions.
    \item $a_c$: number of people affected in region $c \in C$.
    \item $C_{hc}$: cost per person from hub $h$ to region $c$.
    \item $t_{hc}$: travel time from hub $h$ to region $c$.
    \item $T$: maximum allowed (weighted) travel time per region.
    \item $n$: maximum number of hubs that can be opened.
    \item $M$: sufficiently large constant (used in big-M constraints).
\end{itemize}

\paragraph*{Formulations.}
The \ref{eq:unhdr-inefficient} formulation introduces a binary indicator variable $y_h$ to indicate if hub $h \in H$ is opened and binary indicator $z_{hc}$ to indicate if hub $h \in H$ is assigned to serve region $c \in C$. Lastly, $x_{hc}$ is the fraction of region $c$'s demand that is served by hub $h$.\footnote{In the dataset, this variable is denoted $q_{hc}$. We use $x_{hc}$ for consistency with the efficient formulation.} The \ref{eq:unhdr-efficient} formulation drops the hub indicator $z_{hc}$ variables.

\noindent\tiny%
\begin{minipage}[t]{0.48\textwidth}
\vspace{-8pt}
\begin{equation}
\begin{aligned}
\min \quad & \sum_{h \in H} \sum_{c \in C} a_c\,C_{hc}\,x_{hc} \\
\text{s.t.} \quad & x_{hc} \leq M z_{hc} && \forall h \in H,\; c \in C \\
& \sum_{h \in H} z_{hc} = 1 && \forall c \in C \\
& \sum_{c \in C} z_{hc} \leq |C| y_h && \forall h \in H \\
& \sum_{h \in H} x_{hc} = 1 && \forall c \in C \\
& \sum_{h \in H} y_h \leq n \\
& y_h = 1 && \forall h \in H^f \\
& \sum_{h \in H} t_{hc} x_{hc} \leq T && \forall c \in C \\
& x_{hc} \geq 0 && \forall h \in H,\; c \in C \\
& z_{hc} \in \{0,1\} && \forall h \in H,\; c \in C \\
& y_h \in \{0,1\} && \forall h \in H \\
\end{aligned}
\tag{Inefficient}\label{eq:unhdr-inefficient}
\end{equation}
\end{minipage}%
\hfill%
\begin{minipage}[t]{0.48\textwidth}
\vspace{-8pt}
\begin{equation}
\begin{aligned}
\min \quad & \sum_{h \in H} \sum_{c \in C} a_c C_{hc} x_{hc} \\
\text{s.t.} \quad & \sum_{c \in C} x_{hc} \leq |C| y_h && \forall h \in H \\
& \sum_{h \in H} x_{hc} = 1 && \forall c \in C \\
& \sum_{h \in H} y_h \leq n \\
& y_h = 1 && \forall h \in H^f \\
& \sum_{h \in H} t_{hc} x_{hc} \leq T && \forall c \in C \\
& x_{hc} \geq 0 && \forall h \in H,\; c \in C \\
& y_h \in \{0,1\} && \forall h \in H \\
\end{aligned}
\tag{Efficient}\label{eq:unhdr-efficient}
\end{equation}
\end{minipage}\normalsize

%% file: proofs/unhdr_not_constructive.tex
\begin{proposition}
\label{prop:unhdr-not-constructive}
The \ref{eq:unhdr-inefficient} and \ref{eq:unhdr-efficient} formulations of UNHDR are not related by a constructive reformulation (Definition~\ref{def:constructive}) in either direction under the identity parameter mapping $\Phi_\textrm{p}$.
\end{proposition}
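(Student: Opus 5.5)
The plan is to mirror the argument for Proposition~\ref{prop:atm-not-constructive}. I will exhibit a single small instance $p$ on which the two formulations attain objective-value sets of different cardinality. Lemma~\ref{lem:c-objective-level-bijection} then rules out a constructive reformulation under the identity $\Phi_{\mathrm{p}}$ in either direction, since it gives $|V(p)| = |V'(p')|$ whichever formulation is the source.

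\textbf{Choice of instance.} I take one region and two hubs: $C = \{c\}$, $H = \{h_1, h_2\}$, $H^f = \emptyset$, $a_c = 1$, $C_{h_1 c} = 0$, $C_{h_2 c} = 1$, $t_{h_1 c} = t_{h_2 c} = 0$, $T = 0$, $n = 2$, and any $M \ge 1$. The intent is that the single-assignment restriction in \eqref{eq:unhdr-inefficient} leaves only finitely many objective values, while \eqref{eq:unhdr-efficient} allows a continuum of splits.

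\textbf{Step 1: the \eqref{eq:unhdr-inefficient} value set is $\{0,1\}$.}
\begin{itemize}[topsep=2pt,parsep=0pt,leftmargin=2em]
\item The constraint $z_{h_1 c} + z_{h_2 c} = 1$ forces exactly one $z_{hc}$ to equal $1$.
\item For the other hub, $x_{hc} \le M z_{hc} = 0$ together with $x_{hc} \ge 0$ gives $x_{hc} = 0$.
\item Then $\sum_h x_{hc} = 1$ puts all demand on the assigned hub, so the objective is $C_{hc} \in \{0,1\}$.
\item Both values are attained. For hub $h$, set $z_{hc} = 1$, $x_{hc} = 1$ and $y_h = 1$, with the other hub's variables at $0$. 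The remaining constraints hold: $\sum_c z_{hc} \le |C| y_h$, $\sum_h y_h \le 2$, and the time constraint reads $0 \le 0$.
\end{itemize}
Hence $V_{\text{ineff}}(p) = \{0,1\}$.

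\textbf{Step 2: the \eqref{eq:unhdr-efficient} value set is $[0,1]$.}
\begin{itemize}[topsep=2pt,parsep=0pt,leftmargin=2em]
\item Set $y_{h_1} = y_{h_2} = 1$, which is allowed since $n = 2$.
\item For any $\lambda \in [0,1]$, take $x_{h_1 c} = 1 - \lambda$ and $x_{h_2 c} = \lambda$.
\item The capacity constraints $x_{hc} \le |C| y_h = 1$ hold, $\sum_h x_{hc} = 1$, the time constraint is $0 \le 0$, and nonnegativity holds.
\item The objective equals $\lambda$.
\end{itemize}
Hence $[0,1] \subseteq V_{\text{eff}}(p)$, and the set is infinite; showing it equals $[0,1]$ is not needed.

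\textbf{Conclusion.} Suppose a constructive reformulation existed in either direction with $\Phi_{\mathrm{p}}$ the identity. Lemma~\ref{lem:c-objective-level-bijection} would give a bijection between $V_{\text{ineff}}(p)$, which has two elements, and $V_{\text{eff}}(p)$, which is infinite. This is a contradiction.

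\textbf{Main obstacle.} The only delicate point is the instance itself. It must be feasible for both formulations, and it must respect the intended role of the big-$M$ and capacity parameters: $M \ge 1$ so that an assigned hub can carry full demand, and $|C| y_h$ large enough so that splitting is allowed in \eqref{eq:unhdr-efficient}. If an interpretation of the parameter space requires $T > 0$ or strictly positive times, I would instead set $t_{hc} = 1$ and $T = 1$. The weighted time is then $\sum_h x_{hc} = 1 \le T$ for every feasible split, so the argument is unchanged.
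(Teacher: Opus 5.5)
Your proof is correct and follows the paper's approach. Both arguments exhibit one instance on which the inefficient formulation attains finitely many objective values while the efficient formulation attains a continuum, and then apply the objective-level bijection lemma in both directions. The only difference is the instance: the paper takes $t_{1c}=2$, $t_{2c}=0$, $T=1$ so that the response-time constraint forces $V_{\text{ineff}}(p)=\{1\}$ against $V_{\text{eff}}(p)\supseteq[\tfrac12,1]$, while your zero-time instance gives $\{0,1\}$ against $[0,1]$, which works equally well.
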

\begin{proof}
Consider the instance $p$ with $H = \{1, 2\}$, $H^f = \emptyset$, $C = \{c\}$, $a_c = 1$, $n = 2$, costs $C_{1c} = 0$, $C_{2c} = 1$, travel times $t_{1c} = 2$, $t_{2c} = 0$, response time limit $T = 1$, and any $M \geq 1$. Let $V_\textrm{eff}(p)$ and $V_\textrm{ineff}(p)$ denote the objective values attained on $p$ by the \ref{eq:unhdr-efficient} and \ref{eq:unhdr-inefficient} formulations respectively.

In the \ref{eq:unhdr-efficient} formulation, open both hubs ($y_1 = y_2 = 1$). The point $x_{1c} = \lambda$, $x_{2c} = 1 - \lambda$ is feasible for every $\lambda \in [0, \tfrac{1}{2}]$ (the response time constraint reads $2\lambda \leq 1$) and attains objective value $1 - \lambda$. Hence $V_\textrm{eff}(p) \supseteq [\tfrac{1}{2}, 1]$ is uncountable.

In the \ref{eq:unhdr-inefficient} formulation, assigning region $c$ to hub $1$ ($z_{1c} = 1$) forces $x_{1c} = 1$ and violates the response time constraint $t_{1c} x_{1c} = 2 > 1 = T$. Hence the only feasible assignment is $z_{2c} = 1$, which forces $x_{2c} = 1$ and $y_2 = 1$, attaining objective value $1$. Thus $V_\textrm{ineff}(p) = \{1\}$.

Suppose that a constructive reformulation between the two formulations with $\Phi_{\mathrm{p}}$ the identity exists. By Lemma \ref{lem:c-objective-level-bijection}, $|V_\textrm{eff}(p)| = |V_\textrm{ineff}(p)|$ regardless of which formulation is the source. But $V_\textrm{eff}(p)$ is uncountable while $|V_\textrm{ineff}(p)| = 1$, a contradiction.
\end{proof}

%% file: formulations/wfp.tex
This problem consists of designing a food distribution plan for the World Food Program (WFP) to deliver commodities from suppliers, through transshipment points, to beneficiary camps in crisis-affected regions. The plan must satisfy each camp's ration demand and meet per-person nutritional requirements across all nutrients. The objective is to minimize the total procurement and transportation cost. \citet{ferchtandiker2025} introduces the following formulations of this problem.

\paragraph*{Notation.}
\begin{itemize}[topsep=2pt,parsep=0pt,leftmargin=2em]
    \item $K$: set of commodities.
    \item $L$: set of nutrients.
    \item $\mathrm{pc}_k$ is the procurement cost per kg of commodity $k$.\footnote{In the dataset, the variable $q_k$ is used for procurement cost in the inefficient formulation. We use $\mathrm{pc}_k$ for both formulations.}
    \item $\mathrm{nutval}_{k\ell}$: nutrient-$\ell$ content (per kg) of commodity $k \in K$.
    \item $\mathrm{nutreq}_\ell$: per-person requirement for nutrient $\ell \in L$.
    \item $\mathrm{dem}_j$: number of beneficiaries at camp $j$.
    \item $R_k \geq 0$: ration size (kg per person) of commodity $k \in K$.
\end{itemize}

\paragraph*{Formulations.}
The \ref{eq:wfp-efficient} formulation is node-based: $N$ is the set of nodes, partitioned into suppliers $N_S$, transshipment points $N_T$, and beneficiary camps $N_B$. $E_{ij} \in \{0,1\}$ indicates whether an edge from $i$ to $j$ exists and we have a transportation cost $\mathrm{tc}_{ijk} \geq 0$ per kg of commodity $k$ on edge $(i,j)$. Edges run only from suppliers to transshipment points, between transshipment points, and from transshipment points to beneficiary camps; that is, $E_{ij} = 1$ only if
\[
(i,j) \in (N_S \times N_T) \;\cup\; (N_T \times N_T) \;\cup\; (N_T \times N_B).
\]
The decision variable $F_{ijk} \geq 0$ encodes the amount of commodity $k$ shipped from $i$ to $j$. The \ref{eq:wfp-inefficient} formulation is path-based: $P$ is the set of all simple paths from a supplier to a beneficiary camp. The indicator $e_{jp} \in \{0,1\}$ indicates whether path $p$ ends at camp $j$ and $c_{pk}$ is the cost of shipping one kg of commodity $k$ along path $p$. The decision variable $x_{pk} \geq 0$ is the amount of commodity $k$ shipped along path $p$.

\noindent\tiny%
\begin{minipage}[t]{0.48\textwidth}
\vspace{-8pt}
\begin{equation}
\begin{aligned}
\min \quad & \sum_{k \in K} pc_k \left( \sum_{p \in P} x_{pk} \right) +
\\ & \quad \sum_{p \in P} \sum_{k \in K} c_{pk}\, x_{pk} \\
\text{s.t.} \quad & \sum_{p \in P} e_{jp}\, x_{pk} \geq \mathrm{dem}_j\, R_k && \forall j \in N_B,\; k \in K \\
& \sum_{k \in K} \mathrm{nutval}_{k\ell}\, R_k \geq \mathrm{nutreq}_\ell && \forall \ell \in L \\
& x_{pk} \geq 0 && \forall p \in P,\; k \in K \\
& R_k \geq 0 && \forall k \in K \\
\end{aligned}
\tag{Inefficient}\label{eq:wfp-inefficient}
\end{equation}
\end{minipage}%
\hfill%
\begin{minipage}[t]{0.48\textwidth}
\vspace{-8pt}
\begin{equation}
\begin{aligned}
\min \quad & \sum_{k \in K} \mathrm{pc}_k \left( \sum_{j \in N_B} \mathrm{dem}_j R_k \right) + \\ & \quad \sum_{i,j \in N} \sum_{k \in K} \mathrm{tc}_{ijk}\, F_{ijk} \\
\text{s.t.} \quad & \sum_{i \in N} E_{ij} F_{ijk} = \sum_{i \in N} E_{ji} F_{jik} && \forall j \in N_T,\; k \in K \\
& \sum_{i \in N} E_{ij} F_{ijk} \geq \mathrm{dem}_j R_k && \forall j \in N_B,\; k \in K \\
& \sum_{k \in K} \mathrm{nutval}_{k\ell} R_k \geq \mathrm{nutreq}_\ell && \forall \ell \in L \\
& F_{ijk} \geq 0 && \forall i,j \in N,\; k \in K \\
& R_k \geq 0 && \forall k \in K \\
\end{aligned}
\tag{Efficient}\label{eq:wfp-efficient}
\end{equation}
\end{minipage}\normalsize

%% file: appendix/7_experimental_details.tex
\section{Experiment Details}
\label{sec:experimental-details}

\paragraph*{Dataset.}
All experiments were run on \texttt{v0.5.0} of the \texttt{formulation-bench} Python package which pins the dataset to \texttt{v0.4.0}. See Appendix \ref{sec:formulation-bench-details} and the package documentation for details.

\paragraph*{\texttt{FLARE} agent harness.}
Experiments were conducted on a MacBook Pro (Apple M3 Pro, 12-core CPU, 18 GB unified memory) running macOS 15.7.1. We evaluate \texttt{FLARE} on the following agent harnesses:

\begin{itemize}
    \item \textbf{Claude Code.} Version 2.1.197 with Claude Code Max subscription (\$100/month)
    \item \textbf{Codex.} Version 0.147.0 with ChatGPT Pro subscription (\$100/month)
    \item \textbf{OpenCode.} Version 1.18.15 (open-source)
\end{itemize}

We used Claude Code and ChatGPT subscriptions in order to avoid higher API costs. The \texttt{FLARE} experiments can be run within the weekly usage limits but must be batched in order to avoid the 5 hour Claude Code session limit. 

\paragraph*{\texttt{FLARE} compute.}
\texttt{FLARE} runs in a Docker container with a 30 minute time limit. The image is built on Ubuntu 24.04 with Python 3.12 and Node 20. It contains the three agent CLIs above, the Lean toolchain (\texttt{elan} 4.2.3 and Lean 4.28.0, with \texttt{mathlib} pinned to \texttt{v4.28.0}), and the \texttt{lean-lsp-mcp} 0.29.0 MCP server. To allow for increased parallelization, we run each \texttt{FLARE} in its own Modal\footnote{\url{https://modal.com}} Sandbox. Each Sandbox is allocated a guaranteed floor of 2 CPU cores (it may burst higher) and 4 GiB of memory. Modal compute costs are excluded from the reported average costs. The compute costs of the \texttt{FLARE} experiments falls below the \$30/month compute provided for free.

\input{tables/pricing}

\paragraph*{LLMs.} 
Anthropic and OpenAI models are served through the Stanford AI API Gateway\footnote{\url{https://uit.stanford.edu/service/ai-api-gateway}} at a discounted price. The reported average costs are computed using the standard API rates at the time of experimentation (Table \ref{tab:pricing}). The gateway only supports structured output on OpenAI models (if reasoning is enabled). For Anthropic and DeepSeek models, we append the response schema to the prompt and configure retry logic if the response isn't parseable. We specify a max token budget of 8192 when reasoning is disabled and 16384 when reasoning is enabled. We use medium effort for Anthropic and OpenAI models and high effort for DeepSeek models. We found these settings produce comparable reasoning efforts across model families. DeepSeek models reason for longer, but \emph{high} is the lowest effort available. When a model exhausts its token budget before emitting a verdict, we count the truncated response as a judgment of not a reformulation; this occurs only for DeepSeek V4 Pro, on 22 of its 486 ablation runs.

%% file: tables/pricing.tex
\begin{table}[ht]
\centering
\captionsetup{font=footnotesize, labelfont=bf}
\caption{Standard API rates (USD per million tokens) used to compute the reported average costs. \emph{Input} is the uncached (cache-miss) rate and \emph{Cached Input} the cache-hit rate; cache writes (billed at 1.25$\times$ input by Anthropic) are not modeled. GPT-4.1 publishes no cached rate, so its cached tokens are billed at the input rate. DeepSeek uses preview release rates prior to the 8-16-26 price increase.}
\label{tab:pricing}
\footnotesize
\setlength{\tabcolsep}{6pt}
\begin{tabular}{llrrr}
\toprule
\textbf{Model} & \textbf{API Identifier} & \textbf{Input} & \textbf{Cached Input} & \textbf{Output} \\
\midrule
Opus 5            & \texttt{claude-opus-5}     & \$5.000  & \$0.5000  & \$25.00 \\
Sonnet 5          & \texttt{claude-sonnet-5}   & \$2.000  & \$0.2000  & \$10.00 \\
GPT-5.6 Sol       & \texttt{gpt-5.6-sol}       & \$5.000  & \$0.5000  & \$30.00 \\
GPT-5.6 Terra     & \texttt{gpt-5.6-terra}     & \$2.000  & \$0.2000  & \$12.00 \\
GPT-4.1           & \texttt{gpt-4.1}           & \$2.000  & ---       & \$8.00  \\
DeepSeek V4 Pro   & \texttt{deepseek-v4-pro}   & \$0.435  & \$0.0036  & \$0.87  \\
DeepSeek V4 Flash & \texttt{deepseek-v4-flash} & \$0.140  & \$0.0028  & \$0.28  \\
\bottomrule
\end{tabular}
\end{table}

%% file: appendix/8_additional_results.tex
\section{Additional Results}
\label{sec:addtional-results}

\input{tables/flare.tex}
\input{tables/models_p12}

%% file: tables/flare.tex
\begin{table}[h]
\centering
\captionsetup{font=footnotesize, labelfont=bf}
\caption{Evaluation of \texttt{FLARE} across agent harnesses and LLM models. Accuracy results on the FormulationBench dataset are provided in aggregate and segmented by source dataset. Metric cells report results from a single run and \textbf{Bold} indicates the best method on a metric. The Claude Code harness with Opus 5 is the only configuration with \textbf{100\%} accuracy. Codex with GPT-5.6 Sol misses one pair. The open-source OpenCode harness with DeepSeek V4 Pro is an order of magnitude cheaper, but is substantially slower and less accurate. All 11 false negatives are attributable to ATP failure, 8 due to timeout limits (Appendix \ref{sec:experimental-details}).}
\label{tab:flare}
\scriptsize
\setlength{\tabcolsep}{4pt}
\begin{tabular}{l@{\hskip 4pt}l@{\hskip 4pt}lrrrrrrrrr}
\toprule
\textbf{Harness} & \textbf{Model} & \textbf{Effort} & \textbf{TP} & \textbf{FP} & \textbf{TN} & \textbf{FN} & \textbf{Precision} & \textbf{Recall} & \textbf{Accuracy} & \textbf{Avg. Time} & \textbf{Avg. Cost} \\
\midrule
\rowcolor{black!10}\multicolumn{12}{l}{\emph{Overall}} \\
\midrule
Claude Code & Opus 5          & medium &  42 &  0 & 12 &   0 & \cellcolor{light-blue!50}\textbf{100.0\%} & \cellcolor{light-blue!50}\textbf{100.0\%} & \cellcolor{light-blue!50}\textbf{100.0\%} &  410.6s & \$1.180 \\
Codex       & GPT-5.6 Sol     & medium &  41 &  0 & 12 &   1 & \cellcolor{light-blue!50}\textbf{100.0\%} & \cellcolor{light-blue!47}97.6\% & \cellcolor{light-blue!48}98.1\% &  398.1s & \$1.185 \\
OpenCode    & DeepSeek V4 Pro & high   &  31 &  0 & 12 &  11 & \cellcolor{light-blue!50}\textbf{100.0\%} & \cellcolor{light-blue!21}73.8\% & \cellcolor{light-blue!27}79.6\% & 1366.9s & \$0.127 \\
\midrule
\rowcolor{black!10}\multicolumn{12}{l}{\emph{EquivaFormulation \cite{zhai2025a} (\texttt{p2}, \texttt{p3})}} \\
\midrule
Claude Code & Opus 5          & medium &  10 &  0 &  8 &   0 & \cellcolor{light-blue!50}\textbf{100.0\%} & \cellcolor{light-blue!50}\textbf{100.0\%} & \cellcolor{light-blue!50}\textbf{100.0\%} &  229.8s & \$0.628 \\
Codex       & GPT-5.6 Sol     & medium &  10 &  0 &  8 &   0 & \cellcolor{light-blue!50}\textbf{100.0\%} & \cellcolor{light-blue!50}\textbf{100.0\%} & \cellcolor{light-blue!50}\textbf{100.0\%} &  253.6s & \$0.722 \\
OpenCode    & DeepSeek V4 Pro & high   &  10 &  0 &  8 &   0 & \cellcolor{light-blue!50}\textbf{100.0\%} & \cellcolor{light-blue!50}\textbf{100.0\%} & \cellcolor{light-blue!50}\textbf{100.0\%} &  370.7s & \$0.031 \\
\midrule
\rowcolor{black!10}\multicolumn{12}{l}{\emph{EvoCut \cite{yazdani2025} (\texttt{p6}, \texttt{p8}--\texttt{p12})}} \\
\midrule
Claude Code & Opus 5          & medium &  25 &  0 &  3 &   0 & \cellcolor{light-blue!50}\textbf{100.0\%} & \cellcolor{light-blue!50}\textbf{100.0\%} & \cellcolor{light-blue!50}\textbf{100.0\%} &  413.4s & \$1.116 \\
Codex       & GPT-5.6 Sol     & medium &  25 &  0 &  3 &   0 & \cellcolor{light-blue!50}\textbf{100.0\%} & \cellcolor{light-blue!50}\textbf{100.0\%} & \cellcolor{light-blue!50}\textbf{100.0\%} &  478.1s & \$1.436 \\
OpenCode    & DeepSeek V4 Pro & high   &  17 &  0 &  3 &   8 & \cellcolor{light-blue!50}\textbf{100.0\%} & \cellcolor{light-blue!14}68.0\% & \cellcolor{light-blue!18}71.4\% & 1874.2s & \$0.171 \\
\midrule
\rowcolor{black!10}\multicolumn{12}{l}{\emph{\citet{ferchtandiker2025} (\texttt{p13}--\texttt{p20})}} \\
\midrule
Claude Code & Opus 5          & medium &   7 &  0 &  1 &   0 & \cellcolor{light-blue!50}\textbf{100.0\%} & \cellcolor{light-blue!50}\textbf{100.0\%} & \cellcolor{light-blue!50}\textbf{100.0\%} &  807.5s & \$2.645 \\
Codex       & GPT-5.6 Sol     & medium &   6 &  0 &  1 &   1 & \cellcolor{light-blue!50}\textbf{100.0\%} & \cellcolor{light-blue!34}85.7\% & \cellcolor{light-blue!36}87.5\% &  443.6s & \$1.348 \\
OpenCode    & DeepSeek V4 Pro & high   &   4 &  0 &  1 &   3 & \cellcolor{light-blue!50}\textbf{100.0\%} & \cellcolor{light-blue!2}57.1\% & \cellcolor{light-blue!8}62.5\% & 1833.0s & \$0.187 \\
\bottomrule
\end{tabular}
\end{table}

%% file: tables/models_p12.tex
\begin{table}[ht]
\centering
\captionsetup{font=footnotesize, labelfont=bf}
\caption{Evaluation of \texttt{FLARE-NL} across LLM models and reasoning effort levels. Results show \texttt{FLARE-NL} evaluated on the FormulationBench dataset TSP problem. Metric cells report mean $\pm$ std. dev. across 3 runs; TP/FP/TN/FN reports totals. Every model achieves perfect accuracy with reasoning enabled; performance without reasoning varies widely across model families. To reduce costs, we only evaluate the leading frontier reasoning model from each provider in the ablation study (Table \ref{tab:ablation}).}
\label{tab:models-p12}
\scriptsize
\setlength{\tabcolsep}{4pt}
\begin{tabular}{llrrrrrrrrr}
\toprule
\textbf{Model} & \textbf{Effort} & \textbf{TP} & \textbf{FP} & \textbf{TN} & \textbf{FN} & \textbf{Precision} & \textbf{Recall} & \textbf{Accuracy} & \textbf{Avg Time} & \textbf{Avg Cost} \\
\midrule
\rowcolor{black!10}\multicolumn{11}{l}{\emph{Reasoning Disabled}} \\
\midrule
Opus 5            &        & 15 & 0 & 9 &  0 & \cellcolor{light-blue!50}\textbf{100.0$\pm$0.0\%} & \cellcolor{light-blue!50}\textbf{100.0$\pm$0.0\%} & \cellcolor{light-blue!50}\textbf{100.0$\pm$0.0\%} & 22.2s  & \$0.062  \\
Sonnet 5          &        & 15 & 0 & 9 &  0 & \cellcolor{light-blue!50}\textbf{100.0$\pm$0.0\%} & \cellcolor{light-blue!50}\textbf{100.0$\pm$0.0\%} & \cellcolor{light-blue!50}\textbf{100.0$\pm$0.0\%} & 38.9s  & \$0.042  \\
GPT-5.6 Sol       &        & 15 & 0 & 9 &  0 & \cellcolor{light-blue!50}\textbf{100.0$\pm$0.0\%} & \cellcolor{light-blue!50}\textbf{100.0$\pm$0.0\%} & \cellcolor{light-blue!50}\textbf{100.0$\pm$0.0\%} & 6.9s   & \$0.020  \\
GPT-5.6 Terra     &        & 12 & 0 & 9 &  3 & \cellcolor{light-blue!50}\textbf{100.0$\pm$0.0\%} & \cellcolor{light-blue!25}\underline{80.0$\pm$20.0\%} & \cellcolor{light-blue!34}\underline{87.5$\pm$12.5\%} & 5.2s   & \$0.008  \\
GPT-4.1           &        &  0 & 0 & 9 & 15 & \cellcolor{light-blue!0}---                       & \cellcolor{light-blue!0}0.0$\pm$0.0\%                & \cellcolor{light-blue!0}37.5$\pm$0.0\%              & 6.5s   & \$0.008  \\
DeepSeek V4 Pro   &        &  7 & 0 & 9 &  8 & \cellcolor{light-blue!50}\textbf{100.0$\pm$0.0\%} & \cellcolor{light-blue!0}46.7$\pm$11.5\%              & \cellcolor{light-blue!8}66.7$\pm$7.2\%              & 6.4s   & \$0.001  \\
DeepSeek V4 Flash &        &  7 & 1 & 8 &  8 & \cellcolor{light-blue!40}\underline{91.7$\pm$14.4\%} & \cellcolor{light-blue!0}46.7$\pm$23.1\%           & \cellcolor{light-blue!3}62.5$\pm$12.5\%             & 5.0s   & \$0.0004 \\
\midrule
\rowcolor{black!10}\multicolumn{11}{l}{\emph{Reasoning Enabled}} \\
\midrule
Opus 5            & medium & 15 & 0 & 9 &  0 & \cellcolor{light-blue!50}\textbf{100.0$\pm$0.0\%} & \cellcolor{light-blue!50}\textbf{100.0$\pm$0.0\%} & \cellcolor{light-blue!50}\textbf{100.0$\pm$0.0\%} & 16.5s  & \$0.049  \\
Sonnet 5          & medium & 15 & 0 & 9 &  0 & \cellcolor{light-blue!50}\textbf{100.0$\pm$0.0\%} & \cellcolor{light-blue!50}\textbf{100.0$\pm$0.0\%} & \cellcolor{light-blue!50}\textbf{100.0$\pm$0.0\%} & 23.4s  & \$0.029  \\
GPT-5.6 Sol       & medium & 15 & 0 & 9 &  0 & \cellcolor{light-blue!50}\textbf{100.0$\pm$0.0\%} & \cellcolor{light-blue!50}\textbf{100.0$\pm$0.0\%} & \cellcolor{light-blue!50}\textbf{100.0$\pm$0.0\%} & 12.8s  & \$0.027  \\
GPT-5.6 Terra     & medium & 15 & 0 & 9 &  0 & \cellcolor{light-blue!50}\textbf{100.0$\pm$0.0\%} & \cellcolor{light-blue!50}\textbf{100.0$\pm$0.0\%} & \cellcolor{light-blue!50}\textbf{100.0$\pm$0.0\%} & 10.6s  & \$0.012  \\
DeepSeek V4 Pro   & high   & 15 & 0 & 9 &  0 & \cellcolor{light-blue!50}\textbf{100.0$\pm$0.0\%} & \cellcolor{light-blue!50}\textbf{100.0$\pm$0.0\%} & \cellcolor{light-blue!50}\textbf{100.0$\pm$0.0\%} & 65.8s  & \$0.005  \\
DeepSeek V4 Flash & high   & 15 & 0 & 9 &  0 & \cellcolor{light-blue!50}\textbf{100.0$\pm$0.0\%} & \cellcolor{light-blue!50}\textbf{100.0$\pm$0.0\%} & \cellcolor{light-blue!50}\textbf{100.0$\pm$0.0\%} & 46.3s  & \$0.002  \\
\bottomrule
\end{tabular}
\end{table}